\definecolor{Dred}{RGB}{192,0,0} 
\definecolor{Blue}{RGB}{0,0,0} 
\newtheorem{lemma}{Lemma}
\newtheorem{proof}{Proof}
\begin{document}
\title{Complementary Learning Subnetworks for Parameter-Efficient Class-Incremental Learning}

%

\author{Depeng~Li, and~Zhigang~Zeng,~\IEEEmembership{Fellow,~IEEE}
\thanks{Manuscript received Jun 15, 2023. This work was supported in part by the National Key R\&D Program of China under Grant 2021ZD0201300, in part by the Fundamental Research Funds for the Central Universities under Grant YCJJ202203012, in part by the State Scholarship Fund of China Scholarship Council under Grant 202206160045, and in part by the National Natural Science Foundation of China under Grants U1913602 and 61936004. (\textit{Corresponding author: Zhigang Zeng)}}
\thanks{Depeng Li and Zhigang Zeng are with the School of Artificial Intelligence and Automation, with the Institute of Artificial Intelligence, Huazhong University of Science and Technology, and also with the Key Laboratory of Image Processing and Intelligent Control of Education Ministry of China, Wuhan 430074, China (e-mail: dpli@hust.edu.cn; zgzeng@hust.edu.cn).}
}

%
%

\markboth{Journal of \LaTeX\ Class Files,~Vol.~XX, No.~XX, XXXX~XXXX}%
{Shell \MakeLowercase{\textit{et al.}}: A Sample Article Using IEEEtran.cls for IEEE Journals}
%



\maketitle


\begin{abstract}
In the scenario of class-incremental learning (CIL), deep neural networks have to adapt their model parameters to non-stationary data distributions, e.g., the emergence of new classes over time. However, CIL models are challenged by the well-known catastrophic forgetting phenomenon. \textcolor{Blue}{Typical methods such as rehearsal-based ones rely on storing exemplars of old classes to mitigate catastrophic forgetting, which limits real-world applications considering memory resources and privacy issues. In this paper, we propose a novel rehearsal-free CIL approach that learns continually via the synergy between two \textit{Complementary Learning Subnetworks}. Our approach involves jointly optimizing a plastic CNN feature extractor and an analytical feed-forward classifier. The inaccessibility of historical data is tackled by holistically controlling the parameters of a well-trained model, ensuring that the decision boundary learned fits new classes while retaining recognition of previously learned classes.} Specifically, the trainable CNN feature extractor provides task-dependent knowledge separately without interference; and the final classifier integrates task-specific knowledge incrementally for decision-making without forgetting. In each CIL session, it accommodates new tasks by attaching a tiny set of declarative parameters to its backbone, in which only one matrix per task or one vector per class is kept for knowledge retention. \textcolor{Blue}{Extensive experiments on a variety of task sequences show that our method achieves competitive results against state-of-the-art methods, especially in accuracy gain, memory cost, training efficiency, and task-order robustness.} Furthermore, to make the non-growing backbone (i.e., a model with limited network capacity) suffice to train on more incoming tasks, a graceful forgetting implementation on previously learned trivial tasks is empirically investigated. 
\end{abstract}

\begin{IEEEkeywords}
Class-incremental learning, catastrophic forgetting, complementary learning subnetworks, analytical solution, optimization.
\end{IEEEkeywords}

%
\IEEEpeerreviewmaketitle

\section{Introduction} \label{Sec_1}
\IEEEPARstart{T}{he} most common case in the real world is that new objects emerge in a sequence of tasks over time and it is expected to learn them immediately compared to the assumption that all classes are collected in advance and trained offline \cite{minhas2012incremental, mu2017classification, wang2022continuous}. This corresponds to incremental learning, in particular, \textit{class-incremental learning} (CIL), a model that can continuously learn new information without interfering with the previously learned knowledge \cite{de2022continual, tan2021incremental}. CIL places a single model in a dynamic environment where it must learn to adapt from a stream of tasks to make new predictions. In this scenario, however, with data of the current task accessible but none (at least the bulk) of the past, CIL is challenged by degrading performance on old classes, a phenomenon known as \textit{catastrophic forgetting} \cite{mccloskey1989catastrophic, thrun1995lifelong}. For example, despite current deep-learning models, such as convolutional neural networks (CNN), can be trained to obtain impressive performance, they would fail to retain the knowledge of previously learned classes when sequentially trained on new classes \cite{li2017learning, liu2021fast}. 

The main causes of catastrophic forgetting can be two-fold. From the network topology perspective, information acquired is maintained in model parameters (e.g., network weights). If one directly fine-tunes a well-trained model for a new task, its solution will be overwritten to satisfy the current learning objective \cite{french1999catastrophic}. As a result, the parameter-overwritten model will abruptly lose the ability to perform well on a former task. In the sight of data distribution, each task appears in sequences with non-stationary properties where unknown new classes typically emerge over time. This means that the training/test data are not independent and identically distributed (Non-IID). Consequently, it gives rise to serious decision boundary distortion, e.g., inter-class confusion problem \cite{wang2023semantic}.


Recently, numerous CIL approaches have been proposed to address forgetting by adapting model parameters continually, which can be roughly divided into three groups \cite{parisi2019continual, masana2023class, hu2022curiosity}. (1) \textit{Rehearsal-based approaches} involve fine-tuning network weights by keeping partial copies of data information from previously learned tasks in an exemplar buffer \cite{lin2022anchor, hu2022curiosity}. One straightforward implementation of this group would be to preserve pixel-level samples from each seen task for revisiting together with incoming tasks~\cite{rebuffi2017icarl, NIPS2017GEM}. However, the learning performance deteriorates when buffer size decreases, and is eventually not applicable to real-world scenarios where memory constraints or privacy and security issues~\cite{ECCV2018MAS} concern. As an alternative to storing data, generative replay yields past observations in pseudo samples at input layer~\cite{bang2021rainbow} or data representations at hidden layer \cite{liu2020generative, van2020replay}, and interleaves them when learning something new. However, current state-of-the-art generative adversarial networks (GAN) could perform well on simple datasets like MNIST, but scaling it up to more challenging problems (e.g., ImageNet) has been reported to be problematic~\cite{aljundi2019online, wang2022triple}. Actually, this transfers the stress from discriminative models to generative models instead of directly solving forgetting, which is demanding to recover past distributions cumulatively. \textcolor{Blue}{(2) \textit{Regularization-based approaches}, without accessing an exemplar buffer, incorporate additional regularization terms into the loss function to penalize changes of network weights deemed important for old tasks~\cite{PNAS2017EWC, ICML2017SI, WACV2020DMC}. That is, each network parameter is associated with the weight importance computed by different strategies. Nevertheless, the challenge is to correctly assign credit to the weights of an over-parameterized network when the number of tasks is large. (3) \textit{Architecture-based approaches} dynamically modify network components to absorb knowledge needed for novel classes, commonly relying on task-specific mask~\cite{rosenfeld2018dam, serra2018overcoming, ke2021achieving} or expansion~\cite{rusu2016progressive, CVPR2021EFT, AAAI2021PCL} operation to balance network capacity. Among them, the former generates masks with fixed parts allocated to each task and typically requires task identities to activate corresponding components at inference time. Most of them target the less challenging task-incremental learning~\cite{de2022continual} and thus do not apply to task-agnostic CIL. The latter explicitly adds a subnetwork per task by expanding branches while freezing the counterparts that solve previous tasks. A criticism of them is the network growth with the number of tasks \cite{hu2023dense}. More thorough discussions on the above methods can be found in the section of Related Work.}


\textcolor{Blue}{
In this paper, we focus on the more realistic setting in CIL, where training data of previous tasks are inaccessible and memory budgets are limited. To this end, we propose a novel CIL method from the parameter optimization perspective.} Termed CLSNet, two complementary learning subnetworks are trained in an end-to-end manner, which jointly optimizes a plastic CNN feature extractor and an analytical single-hidden layer feed-forward network (SLFN) classifier. The core idea of CLSNet is to holistically control the parameters of a well-trained model without being overwritten such that the decision boundary learned fits new classes while retaining its capacity to recognize old classes. This yields an effective learning framework amenable to CIL, as detailed later in Sec.~\ref{Sec_3}. 

Our main contributions and strengths are summarized as follows. 

\begin{itemize}
    \item[1)] We propose CLSNet, a parameter-efficient CIL method comprised of two complementary learning subnetworks. In each CIL session, it assimilates new tasks by attaching a tiny set of declarative parameters to its backbone. This corresponds to an extremely limited memory budget, e.g., only one vector per class is kept for retrieving knowledge.
    
    \item[2)] We take a drastically different approach to jointly optimize the two subnetworks (i.e., a learnable CNN feature extractor and an SLFN classifier with closed-form solutions) by combining the respective strengths of back-propagation and forward-propagation. CLSNet does not rely on buffering past data for training the subnetworks.

    \item[3)] Given a non-growing backbone, it is empirically investigated that CLSNet can innately manage the long/short-term memory of each seen task, e.g., graceful forgetting previously learned trivial tasks makes it suffice to train on incoming tasks better with limited network capacity.
    
    \item[4)] The effectiveness of our approach is demonstrated by five evaluation metrics, three types of task sequences, and sufficient representative baselines, outperforming existing methods in terms of accuracy gain, memory cost, training efficiency, as well as task-order robustness.
\end{itemize}

\section{Related work} \label{Sec_2}
\subsection{\textcolor{Blue}{Class-Incremental Learning}}

\textcolor{Blue}{
A significant body of work has studied methods for addressing catastrophic forgetting. In this section, we discuss a selection of representative CIL approaches and highlight relationships to the proposed CLSNet. Based on how task-specific information is accumulated and leveraged throughout the CIL process, prior works fall into three main categories~\cite{van2019three, hsu2018re}.}

\textcolor{Blue}{
\textbf{Rehearsal-based approaches} rely on an exemplar buffer that stores data of previous tasks for rehearsal. As a classical method, GEM~\cite{NIPS2017GEM} utilizes gradient episode memory by independently constraining the loss of each episodic memory non-increase. On this basis, LOGD~\cite{CVPR2021LOGD} specifies the shared and task-specific information in episodic memory. IL2M~\cite{ICCV2019IL2M} introduces a dual-memory strategy to store both bounded exemplar images and past class statistics. i-CTRL~\cite{tong2022incremental} is founded on compact and structured representations of old classes, characterized by a fixed architecture. As previously mentioned, leveraging a rehearsal buffer in any form (e.g., raw pixels, pseudo samples, and data representations) to retrain on all previously learned tasks is less efficient and is prohibited when considering privacy and security issues~\cite{ECCV2018MAS}. By contrast, the proposed CLSNet, a rehearsal-free method, rethinks the above limitations from a parameter/solution space perspective. Hence, it is simple yet effective to properly optimize the parameters of a model itself compared to buffering and retraining past observations cumulatively.}

\textcolor{Blue}{
\textbf{Regularization-based approaches} avoids storing data so as to prioritize privacy and alleviate memory requirements. Instead, the movement of important parameters is penalized during the training of later tasks, with parameters assumed independent to ensure feasibility. EWC \cite{PNAS2017EWC} is the pioneer of this line of work, followed by SI \cite{ICML2017SI}, and MAS \cite{ECCV2018MAS}. Additionally, DMC~\cite{WACV2020DMC} involves first training a separate model for the new classes and then using publicly available unlabeled auxiliary data to integrate the new and old models via a double distillation. OWM~\cite{NMI2019OWM} aims to find orthogonal projections of weight updates that do not disturb the connecting weights of previous tasks. However, layer-wise regularization makes it difficult to find the optimal parameters when tasks are challenging. By comparison, our approach could alleviate their demanding requirement for parameter optimization by precisely controlling the parameters of a well-trained model. Particularly, we apply the regularization into the final decision layer because the feature map extracted from deeper layers is more likely to contain task-specific information, and the deeper layer can easily forget previous knowledge \cite{tang2022learning}. Detailed differences and strengths will be discussed in Sec.~\ref{Discussion}.}

\textcolor{Blue}{
\textbf{Architecture-based approaches} dynamically modify network architectures to adsorb knowledge needed for novel tasks. DER~\cite{yan2021dynamically} allocates a new learnable feature extractor per task and augments with the previously frozen features of all sub-networks in each CIL session. Similarly, FOSTER~\cite{wang2022foster} adds an extra model compression process by knowledge distillation, which alleviates expensive model overhead. DyTox~\cite{douillard2022dytox} leverages a transformer architecture with the dynamic expansion of task tokens. Furthermore, MEMO~\cite{zhou2022model} decouples the network structure and only expands specialized building blocks. Nevertheless, architecture-based approaches generally require a substantially large number of additional parameters to assist model separation, whose expansion criterion relies on the change of the loss and thus lacks theoretical guarantees. Our method differs in building upon a non-growing backbone and can vacate the network capacity by selectively removing previously learned trivial tasks.}

\begin{figure*}[htbp]
    \centering
    \includegraphics[width=4.8in]{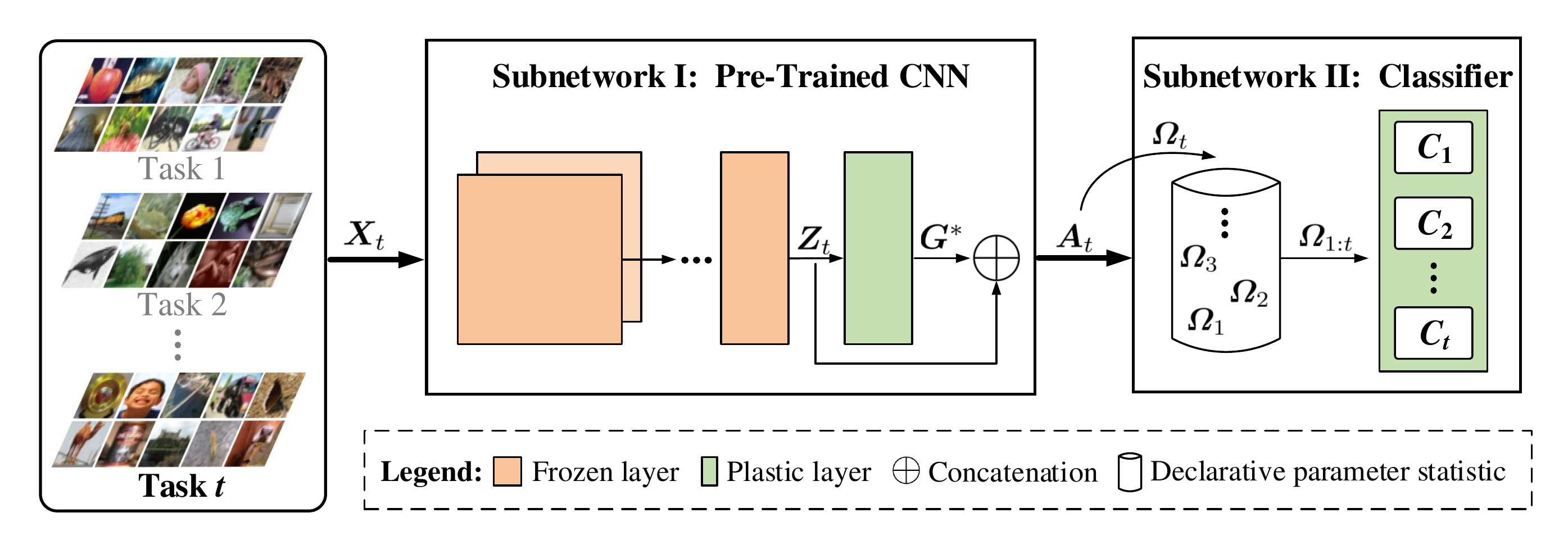}
    \caption{Overview of the proposed simple yet effective CLSNet for defying catastrophic forgetting. At training session $t$, the pre-trained CNN encodes the image inputs $\bm{X}_t$ into discriminative representations $\bm{A}_t$, which are then gradually consolidated into the analytical solution $\bm{\varOmega}_{1:t}$ in the classifier through the “replay” of declarative parameter statistic $\bm{\varOmega}=[\bm{\varOmega}_1, \bm{\varOmega}_2, \dots,\bm{\varOmega}_{t-1}]$. We jointly optimize the two subnetworks during sequential training without buffering past data. Instead, it only needs to attach a tiny set of declarative parameters with a smaller memory budget.}
    \label{Fig_CLSNet}
\end{figure*}

\subsection{Class-Incremental Learning with Pre-trained Models}
Using pre-trained models/feature extractors has been a well-received option in computer vision. Recent work on CIL also highlights the potential importance of pre-trained models~\cite{rios2022incdfm, ke2021achieving}, hypothesizing that a strong base model can provide transferable representations for novel classes. Then, the downstream CIL can be done with small adaptations~\cite{wu2022class}. PCL~\cite{AAAI2021PCL} consists of a pre-trained model as the base shared by all tasks, followed by class-specific heads. As an MLP-based CIL classifier, OWM~\cite{NMI2019OWM} takes advantage of a feature extractor that is pre-trained based on a different number of classes to analyze the raw images. REMIND~\cite{hayes2020remind} takes in an input image and passes it through frozen layers of a CNN to obtain tensor representations since early layers of CNNs have been shown to be highly transferable. Besides, some prompt-based models such as L2P~\cite{wang2022learning} and DualPrompt~\cite{wang2022dualprompt} adopt a small set of learnable parameters to instruct the representations of a pre-trained vision transformer (ViT) and store such parameters in memory space. \textcolor{Blue}{The above illustrates how pre-trained models are extensively used in the CIL community, which accommodates the real-world scenario where pre-training is usually involved as a base session.}

Although CIL starting with pre-trained models can ease the burden of catastrophic forgetting, a critical challenge is that the pre-trained knowledge needs to be adaptively leveraged for the current task while guaranteeing sufficient generalizability to future tasks \cite{zhou2023deep}. As is revealed by \cite{wang2022dualprompt, mirzadeh2022architecture}, some CIL methods still suffer from serious performance degradation given a frozen pre-trained backbone. From this perspective, it is more desirable to train a parameter-efficient CIL model holistically instead of building on top of a totally static pre-trained model. In light of this, the proposed method develops a learnable CNN as a feature extractor, followed by an analytical SLFN classifier. In each CIL session, we jointly optimize the two subnetworks to learn a new task while maintaining sufficient plasticity to incoming tasks. To our knowledge, this is yet underexplored.

\section{Problem Setting} 
This paper focuses on the most common but challenging class incremental learning (CIL) scenario \cite{van2019three, hsu2018re, masana2023class}. The problem setting can be formally defined as follows. We denote a sequence of tasks as $\bm{D}=\{\bm{D}_1, \bm{D}_2, \dots, \bm{D}_T\}$. At training session $t$, we only have access to the supervised learning datasets $\bm{D}_t=\{(\bm{X}_t,\bm{Y}_t)|\bm{X}_t\in \mathbb{R}^{N_t\times M_t}, \bm{Y}_t\in \mathbb{R}^{N_t\times C_t}\}$ of task $t$ ($t=1,2,\dots,T$), where $\bm{X}_t$ is the input, $\bm{Y}_t$ is the label, $N_t$ is the number of samples, $M_t$ and $C_t$ are the dimensions of input and output. There is no overlap between the new classes of different training sessions, i.e., $C_i\cap C_j = \varnothing (i\neq j)$. Assumed a model $\mathcal{M}(\bm{\theta}_{t-1})$~$(t\geq2)$ trained on previous task(s), parameterized by its connecting weight $\bm{\theta}_{t-1}$, the objective is to train an updated model $\mathcal{M}(\bm{\theta}_{t})$ which can incrementally recognize the newly emerging $C_t$ classes based on the datasets $\bm{D}_t$. For instance, $\mathcal{M}(\bm{\theta}_{T})$ needs to remember how to perform well on the cumulative $\sum_{t=1}^{T}{C_t}$ classes.  At test time, samples may come from any of tasks 1 to $T$, and $\mathcal{M}(\bm{\theta}_{T})$ needs to discriminate between all classes seen so far without knowing task identities. The remaining notations used throughout this paper are summarized in Table \ref{Table_Notation}.

\begin{table}[htbp]
    \caption{Notations and the descriptions}
    \label{Table_Notation}
    \centering
    \begin{tabular}{ll}
        \toprule
        Notation & Description\\ \midrule
        $\bm{\widehat{Y}}_t$  &Model output of task $t$    \\
        $g(\cdot)$            &Output function of a pre-trained CNN parameterized by $\bm{\theta}_g$   \\
        $f(\cdot)$            &Output function of the final classifier parameterized by $\bm{\theta}_f$   \\
        $g^{\prime}(\cdot)$ & The frozen layers in $g(\cdot)$ parameterized by $\bm{\theta}_{g^{\prime}}$ \\
        $g^{\prime\prime}(\cdot)$ & The plastic layer in $g(\cdot)$ parameterized by $\bm{\theta}_{g^{\prime\prime}}$ \\
        $\bm{Z}_t$ & Drifted representations of task $t$ \\
        $\bm{G}$ & Concatenation of multiple random mapping features \\
        $\bm{G}^{*}$ & Optimal representations based on $\bm{\theta}^{*}_{g^{\prime\prime}}$ \\
        $\bm{A}_t$ & Output of $g(\cdot)$ after diverse representation augmentation \\
        $\bm{\varOmega}_t$ & Declarative parameter of task $t$ \\
        $\bm{\varOmega}$ & Declarative parameter statistic over tasks seen so far \\
        $\bm{\varOmega}_{1:t}$ & Analytical solution over tasks seen so far \\
        $\bm{E}_T$ &  Prediction residual of task $T$ \\
        $\lambda_t$ & Trade-off among new and old tasks satisfying $\gamma_t=\lambda_t N_{t+1}$ \\
        $\bm{P}_t$ & Declarative parameter plasticity satisfying $\bm{\mathcal{F}}_t=\bm{P}_t\odot\bm{P}_t$ \\
        $\bm{K}_{t,c}$ & Diagonal matrix of  $\bm{\mathcal{F}}_{t,c}$ for class $c$\\
        \bottomrule
    \end{tabular}
\end{table}

\section{Methodology} \label{Sec_3}
\subsection{Overview of CLSNet} \label{Pipeline_design}
CLSNet is a parameter-efficient CIL method that learns continually via the synergy between two complementary learning subnetworks, i.e., the interplay of a plastic CNN feature extractor and an analytical SLFN classifier. Fig. \ref{Fig_CLSNet} depicts the overview of our method. The core idea is to holistically control the parameters of a well-trained model without being overwritten, such that the decision boundary learned fits new classes while retaining its capacity to recognize old classes. Learning involves two steps: 1) one subnetwork provides task-dependent knowledge separately without interference, and 2) another subnetwork incrementally integrates the knowledge specific to that task serving for decision-making without forgetting. Our method doesn't store past exemplars endlessly. Instead, it only requires the attachment of a small set of declarative parameters, making use of an extremely limited memory budget during sequential training. In addition, CLSNet can selectively forget previously learned inessential tasks to train subsequent tasks better, given a bounded network capacity.

Formally, we denote $\bm{X}_t$ as the input and $\bm{\widehat{Y}}_t$ as the corresponding model output given the current task $t$. Our CLSNet $\bm{\widehat{Y}}_t = f(g(\bm{X}_t))$ is trainable in each CIL session, where the two complementary learning subnetworks are formulated by two nested functions: $g(\cdot)$, parameterized by $\bm{\theta}_g$, consists of a CNN feature extractor; and $f(\cdot)$, parameterized by $\bm{\theta}_f$, consists of the final classifier in the decision layer. CLSNet jointly optimizes the two subnetworks by combining the respective strengths of back-propagation and forward-propagation. \textcolor{Blue}{The following explains how CLSNet addresses catastrophic forge via the synergy of two complementary learning subnetworks.}  

\subsection{Subnetwork I: Pre-Trained CNN Feature Extractor}
\label{Subnetwork_I}
The pre-trained models provide generalizable representations for the downstream tasks, weakening the difficulty of sequential training since it can be done with small adaptations. However, a critical challenge is that the datasets used to do upstream pre-training cannot cover that of CIL, i.e., no overlaps between them. Therefore, these fixed representations are intractable to be adaptively leveraged for maintaining enough generalizability to a sequence of tasks. In light of this, we incorporate the plastic components into a pre-trained model. In each CIL session, raw images are first passed through a frozen CNN $g^{\prime}(\cdot)$ and then augmented with a plastic/updatable layer $g^{\prime\prime}(\cdot)$ of the subnetwork, i.e., $g(\cdot) \leftarrow g^{\prime\prime}(g^{\prime}(\cdot))$.

\subsubsection{Initializing A Frozen CNN}
During sequential learning, we start with initializing a pre-trained CNN model as a feature extractor. This implies that the low-level representations provided by the upstream pre-training must be highly transferable across image datasets \cite{hayes2020remind}. To perform such \textit{base initialization}, we follow the common practice in prior work \cite{NMI2019OWM, AAAI2021PCL} by pre-training on a portion of the dataset offline. After that, parameters of subnetwork $g^{\prime}(\cdot; \bm{\theta}_{g^{\prime}})$ are kept fixed except for its last layer $g^{\prime\prime}(\cdot; \bm{\theta}_{g^{\prime\prime}})$. Specifically, the base initialization can be achieved by a standard ResNet-18, where we jointly initialize both the first 15 convolutional layers and 3 downsampling layers by the back-propagation algorithm on an initial subset of data, e.g., the first 200 classes of ImageNet datasets and the remaining for CIL. After it is well-trained, the former 15 convolutional layers and 2 downsampling layers are frozen, obtaining $\bm{Z}_t = g^{\prime}(\bm{X}_t; \bm{\theta}_{g^{\prime}})$; while the 18th layer is still trainable for representation augmentation, as presented below.

\subsubsection{Augmenting Transferability on A Sequence of Tasks}
The discriminative information learned for old tasks may not be sufficiently discriminative between the incoming tasks~\cite{CVPR2021EFT, AAAI2021PCL}. \textcolor{Blue}{Since the fixed representations may lack transferability on a sequence of tasks which we refer to \textit{drifted representations}, we now elaborate on how to push $\bm{Z}_t$ back to a task-optimal working state. In detail, we design a new \textit{diverse representation augmentation} strategy for unsupervised network parameter optimization, which does not require explicit label information during sequential training. Instead, the involved weights (and biases) can be reused to reproduce the optimal representations of each seen task so far, as shown in Fig. \ref{Representation_aug}.}

\begin{figure}[tbp]
    \centering
    \includegraphics[width=3.2in]{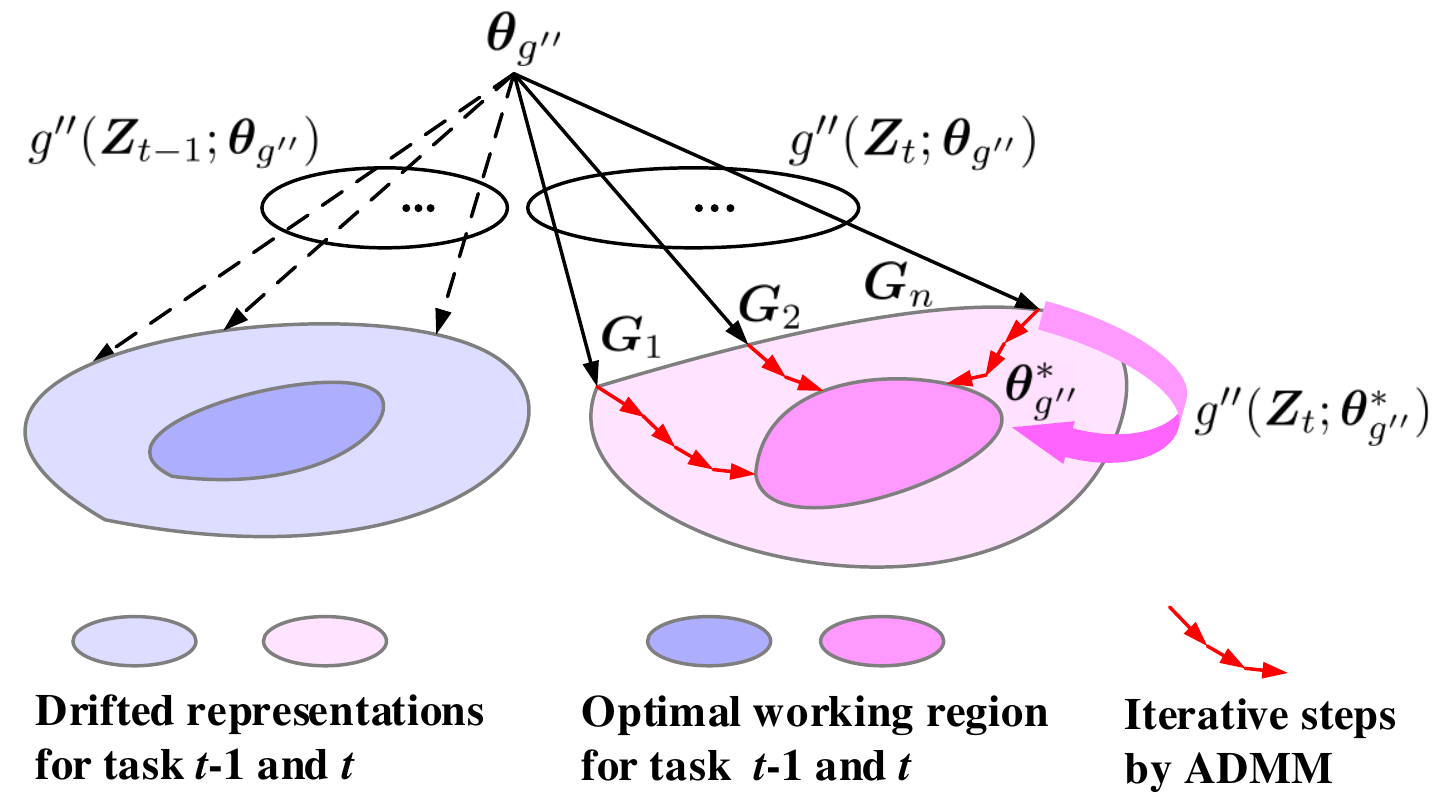}
    \caption{\textcolor{Blue}{The diverse representation augmentation strategy, aimed at improving the transferability of fixed representations provided by a pre-trained model. We first diversify knowledge by collecting multiple drifted representations; Then, we incrementally adjust the network parameters for representation augmentation such that they can recover at an optimal level.}}
    \label{Representation_aug}
\end{figure}

\textcolor{Blue}{
\textit{a) Diversifying drifted representations:} Specifically, to learn diverse and rich features potentially useful for new tasks, we first divide the last layer $g^{\prime\prime}(\cdot; \bm{\theta}_{g^{\prime\prime}})$ of the pre-trained model into $n$ groups of mapping feature nodes with each $k$ nodes:}

\begin{equation}\label{Gi}
    \bm{G}_i=\bm{Z}_t\bm{W}_i+\bm{\beta}_i, i=1,2,\dots,n
\end{equation}

\noindent where $\bm{W}_i$ and $\bm{\beta}_i$ are randomly assigned within a proper scope setting (e.g., $[-1, 1]$) \cite{TNNLS2017BLS, wang2017stochastic}, denoted by $\bm{\theta}_{g^{\prime\prime}} = \{\bm{W}_i, \bm{\beta}_i\}$. In this way, we have $\bm{G}=[\bm{G}_1, \bm{G}_2, \dots, \bm{G}_n]$ by concatenating all the \emph{n} groups of random mapping features. 

\textcolor{Blue}{
\textit{b) Restoring optimal representations:} Then, it is critical to tweak every $\bm{G}_i$ at an optimal level, achieved by slightly adjusting the network parameters $\bm{\theta}_{g^{\prime\prime}}$ to $\bm{\theta}^{*}_{g^{\prime\prime}}$. To this end, we formulate it into an optimization problem, i.e.,}

\begin{equation}\label{Theta}
    \begin{split}
    \arg\min_{\bm{\theta}^{*}_{g^{\prime\prime}}}&: p(\bm{\theta}^{*}_{g^{\prime\prime}}) + q(\bm{\theta}_{g^{\prime\prime}})  \\
        s.t.&: \bm{\theta}^{*}_{g^{\prime\prime}} - \bm{\theta}_{g^{\prime\prime}} = \bm{0} 
    \end{split}
\end{equation}

\noindent where $p(\bm{\theta}^{*}_{g^{\prime\prime}}) = \Vert \bm{G}_i\bm{\theta}^{*}_{g^{\prime\prime}} -\bm{Z}_t \Vert^2_2$,  $q(\bm{\theta}_{g^{\prime\prime}}) = \alpha\Vert\bm{\theta}_{g^{\prime\prime}}\Vert_{1}$, and $\alpha$ is a constant. The above problem denoted by lasso\cite{tibshirani1996regression} is convex, which can be solved by the alternating direction method of multipliers
(ADMM) \cite{boyd2011distributed}. The implementation of ADMM is described as the following iterative steps:

\begin{equation}\label{admm1}
    \left\{
    \begin{aligned}
        \bm{\theta}^{*}_{g^{\prime\prime}}(k+1) = & (\rho\bm{I} + \bm{G}_i^{\mathrm{T}}\bm{G}_i)^{-1}(\bm{G}_i^{\mathrm{T}}\bm{Z}_t + \\ &\rho(\bm{\theta}_{g^{\prime\prime}}(k) - \bm{u}(k)))\\
        \bm{\theta}_{g^{\prime\prime}}(k+1) = & S_{\frac{\alpha}{\rho}}(\bm{\theta}^{*}_{g^{\prime\prime}}(k+1) + \bm{u}(k)) \\
        \bm{u}(k+1) = & \bm{u}(k) + (\bm{\theta}^{*}_{g^{\prime\prime}}(k+1) - \bm{\theta}_{g^{\prime\prime}}(k+1)) 
    \end{aligned}
    \right.
\end{equation}

\noindent where $\rho > 0$ and $S$ is the soft thresholding operator, which can be defined as

\begin{equation}\label{admm2}
    S_b(a) = \left\{
    \begin{aligned}
        a-b, \quad &a >b\\
        0, \quad &\vert a \vert\leq b\\
        a+b, \quad &a < -b\\
    \end{aligned}
    \right.
\end{equation}

\textcolor{Blue}{By performing the above iterative steps, we could manage to recover the optimal representations $\bm{G}^{*}=[\bm{G}_1^{*}, \bm{G}_2^{*}, \dots, \bm{G}_n^{*}]$ in an unsupervised manner, where  $\bm{G}_i^{*}=\bm{Z}_t\bm{W}_i^{*}+\bm{\beta}_i^{*}$, $\bm{\theta}^{*}_{g^{\prime\prime}} = \{\bm{W}_i^{*}, \bm{\beta}_i^{*}\}$, and $g^{\prime\prime}(\bm{Z}_t; \bm{\theta}^{*}_{g^{\prime\prime}})$.} Using the optimized $\bm{\theta}^{*}_{g^{\prime\prime}}$ as the last layer's weights of the pre-trained model would more vividly reflect the discriminative information of each task. Finally, $g^{\prime}(\bm{X}_t; \bm{\theta}_{g^{\prime}})$ are expanded with $g^{\prime\prime}(\bm{Z}_t; \bm{\theta}^{*}_{g^{\prime\prime}})$ as a whole to augment the transferability on a sequence of tasks (see Sec.\ref{Ablation_2} for more), which is denoted as $\bm{A}_t=[\bm{Z}_t,\bm{G}^{*}]$.

\subsection{Subnetwork II: Analytical SLFN Classifier}
To guarantee the decision boundary learned fits new classes while retaining its capacity to accommodate unseen classes, we present an analytical SLFN classifier $f(\cdot; \bm{\theta}_f)$ as the read-out layer. As opposed to the commonly-used softmax layer, we derive the closed-form solution for the final single-head classifier. \textit{First}, we compute the output weights of each task, designated as \textit{declarative parameters}; \textit{Second}, we incrementally update them over all tasks seen so far for the decision-making process. In this way, $\bm{\theta}_f$ is explicitly computed into a closed-form solution and consolidated in a forward-propagation fashion, instead of error back-propagation.

\subsubsection{Computing Declarative Parameters} \label{Sec_Classifier}
Based on the optimal representations $\bm{A}_t$ yielded by the pre-trained CNN feature extractor, the output function of the classifier is $f(\bm{A}_t; \bm{\theta}_f)$, which can be written as the matrix form $\bm{Y}_t=\bm{A}_t\bm{\varOmega}_t$. In this way, the declarative parameter $\bm{\varOmega}_t$ of task $t$ can be easily obtained by the Moore-Penrose generalized inverse, i.e., $\bm{\varOmega}_t = \bm{A}_t^{+}\bm{Y}_t$. The computational complexity introduced by the matrix inversion operation can be further circumvented by

\begin{equation}\label{A+}
    \bm{A}_t^+=(\rho\bm{I}+\bm{A}_t^\mathrm{T}\bm{A}_t)^{-1}\bm{A}_t^\mathrm{T}
\end{equation}

\noindent where $\bm{A}_t^\mathrm{T}$ is the transpose of $\bm{A}_t$ and the same constraint term $\rho$ in Eq. (\ref{admm1}) is employed to find the pseudo-inverse when the original generalized inverse is under the ill condition. Without loss of generality, the declarative parameter of task $t$ can be denoted as

\begin{equation}\label{varOmega}
    \bm{\varOmega}_t=(\rho\bm{I}+\bm{A}_t^\mathrm{T}\bm{A}_t)^{-1}\bm{A}_t^\mathrm{T}\bm{Y}_t
\end{equation}

Meanwhile, we preserve each $\bm{\varOmega}_t$ for knowledge retention, termed declarative parameter statistic $\bm{\varOmega} = [\bm{\varOmega}_1, \bm{\varOmega}_2, \dots, \bm{\varOmega}_t]$, meaning that our method keeps only one matrix per task or one vector per class, which is memory-efficient compared to the exemplar budget used in rehearsal-based methods.

\subsubsection{Decision-Making Without Forgetting} 
Aided by the declarative parameter statistic, we update the classifier to retain the knowledge of previously learned tasks. Suppose there have been $T-1$ $(T\geq 2)$ tasks trained so far and we have the resulting declarative parameter statistic
$\bm{\varOmega}=[\bm{\varOmega}_1, \dots,\bm{\varOmega}_{T-1}]$. Here we refer to $\bm{\varOmega}_{1:T-1}$ as the final classifier's \textit{closed-form solution} that has consolidated the previously declarative parameter $\bm{\varOmega}_t$ $(t=1,2,\dots, T-1$), and thus the classifier's output $\bm{\widehat{Y}}_t=\bm{A}_t\bm{\varOmega}_{1:T-1}$. When the $T$th task arrives, the goal is to enrich the $\bm{\varOmega}_{1:T-1}$ as $\bm{\varOmega}_{1:T}$ such that it can accommodate the new task without forgetting previous ones. We formulate this into the following objective function:

\begin{equation}\label{Obj_fun}
    \begin{split}
        \bm{J} = &\frac{1}{2N_T}\Vert\bm{E}_T\Vert_2^2+\frac{1}{2}\sum_{t=1}^{T-1}\Vert \sqrt{\lambda_t}\bm{P}_{t}\odot(\bm{\varOmega}_{1:T} - \bm{\varOmega}_{t})\Vert_2^2 \\
        & + \frac{1}{2N_T}\Vert \bm{\varOmega}_{1:T} - \bm{\varOmega}_{1:T-1} \Vert_2^2
    \end{split}
\end{equation}

\noindent where $\bm{E}_T=\bm{A}_T\bm{\varOmega}_{1:T}-\bm{Y}_T$ is the prediction residual, $\odot$ is the element-wise product, $\lambda_t$ is the trade-off that controls how important old tasks are compared to task $T$, and $\bm{P}_t$ matters the \textit{declarative parameter plasticity} that is discussed in the next part (see Sec. \ref{Plasticity}). \textcolor{Blue}{Overall, the first term guarantees the sound performance of current task $\{\bm{X}_T, \bm{Y}_T\}$; the second term preserves the previously acquired knowledge; and the third term counterbalances systematic bias favoring tasks learned earlier on, which is overlooked in some regularization-based methods \cite{PNAS2017EWC, ICML2017SI}. We discuss the differences in detail later in Sec.~\ref{Discussion} after we have introduced our method properly.}

Now, we analytically solve Eq. (\ref{Obj_fun}) to obtain the closed-form solution of the final classifier, which is parameter-efficient compared to conventional gradient descent methods. To complete the derivation process, we take the derivative of the objective function with respect to $\bm{\varOmega}_{1:T}$:

\begin{equation}\label{partial}
    \begin{split}
        \frac{\partial\bm{J}}{\partial\bm{\varOmega}_{1:T}}=&\frac{1}{N_T}\bm{A}_T^\mathrm{T}(\bm{A}_T\bm{\varOmega}_{1:T} - \bm{Y}_T)+\sum_{t=1}^{T - 1}\lambda_t\bm{\mathcal{F}}_t\odot(\bm{\varOmega}_{1:T} - \bm{\varOmega}_t) \\
        & + \frac{1}{N_T}(\bm{\varOmega}_{1:T} - \bm{\varOmega}_{1:T-1})
    \end{split}
\end{equation}

\noindent where $\bm{\mathcal{F}}_t=\bm{P}_t\odot\bm{P}_t$ is the element-wise product of declarative parameter plasticity $\bm{P}_t$. By setting ${\partial\bm{J} \mathord{\left/{\vphantom {1 2}} \right.\kern-\nulldelimiterspace} \partial\bm{\varOmega}_{1:T}} = 0$, we have

\begin{equation}\label{substitution}
    \begin{split}
        & \bm{A}_T^\mathrm{T} \bm{A}_T\bm{\varOmega}_{1:T} + \sum_{t=1}^{T - 1}\gamma_t\bm{\mathcal{F}}_t\odot\bm{\varOmega}_{1:T} + \bm{\varOmega}_{1:T} \\ = & \bm{A}_T^\mathrm{T}\bm{Y}_T + \sum_{t=1}^{T - 1}\gamma_t\bm{\mathcal{F}}_t\odot\bm{\varOmega}_t + \bm{\varOmega}_{1:T-1}
    \end{split}
\end{equation}

\noindent  where $\gamma_t=\lambda_t N_{t+1}$. However, it is intractable to directly obtain $\bm{\varOmega}_{1:T}$ due to the coexistence of matrix product and element-wise product in Eq. (\ref{substitution}). Here, we employ a partitioned matrix diagonalization (PMD) strategy for problem-solving. Specifically, $\bm{\varOmega}_{1:T}$, $\bm{\mathcal{F}}_t$, and $\bm{Y}_T$ are first partitioned by columns, i.e., the corresponding entries for each class. Then, we have $\bm{\varOmega}_{1:T}=[\bm{\varOmega}_{1:T,1},\dots,\bm{\varOmega}_{1:T,c},\dots,\bm{\varOmega}_{1:T,C_T}]$, $\bm{\mathcal{F}}_t=[\bm{\mathcal{F}}_{t,1},\dots,\bm{\mathcal{F}}_{t,c},\dots,\bm{\mathcal{F}}_{t,C_t}]$, and $\bm{Y}_T=[\bm{Y}_{T,1},\dots,\bm{Y}_{T,c},\dots,\bm{Y}_{T,C_T}]$. Finally, we diagonalize $\bm{\mathcal{F}}_{t,c}$ into $\bm{K}_{t,c}$ for $c=1,2,\dots,C_t$ and Eq. (\ref{substitution}) can be rewritten as

\begin{equation}\label{combination}
    \begin{split}
        &\bigg(\bm{A}_T^\mathrm{T}\bm{A}_T + \sum_{t=1}^{T - 1}\gamma_t\bm{K}_{t,c} + \bm{I}\bigg)\bm{\varOmega}_{1:T,c} \\
        = &\bm{A}_T^\mathrm{T}\bm{Y}_{T,c} + \sum_{t=1}^{T - 1}\gamma_t\bm{K}_{t,c}\bm{\varOmega}_{t,c} + \bm{\varOmega}_{1:T-1,c}
    \end{split}
\end{equation}

\noindent Note that there are only matrix product operations in the above analytical expression, compared to Eq. (\ref{substitution}). Denote by $\bm{\varOmega}_{1:T,c}$ in $\bm{\varOmega}_{1:T}=[\bm{\varOmega}_{1:T,1},\dots,\bm{\varOmega}_{1:T,c},\dots,\bm{\varOmega}_{1:T,C_T}]$ the closed-form solution for $c$th class ($c=1,2,\dots,C_T$), which can be recursively updated as

\begin{equation}\label{varOmega_cc}
    \begin{split}
    \bm{\varOmega}_{1:T,c} = & \bigg(\bm{A}_T^\mathrm{T}\bm{A}_T+\sum_{t=1}^{T - 1}\gamma_t\bm{K}_{t,c} + \bm{I} \bigg)^{-1} \\
    &\times \bigg(\bm{A}_T^\mathrm{T}\bm{Y}_{T,c}+\sum_{t=1}^{T - 1}\gamma_t\bm{K}_{t,c}\bm{\varOmega}_{t,c} + \bm{\varOmega}_{1:T-1,c} \bigg)
    \end{split}
\end{equation}

\noindent where $\bm{K}_{t,c}$ and $\bm{Y}_{T,c}$ are the counterparts of $\bm{P}_t$ and $\bm{Y}_{T}$. Hence, the final classifier with closed-form solution $\bm{\widehat{Y}}_t=\bm{A}_t\bm{\varOmega}_{1:T}$ can recognize any of tasks 1 to $T$.
 
\subsubsection{Determining Declarative Parameter Plasticity} \label{Plasticity} 
We now discuss how to determine the declarative parameter plasticity, i.e., $\bm{\mathcal{F}}_t=\bm{P}_t\odot\bm{P}_t$, which is indispensable to guide the update of the final classifier. Inspired by the online Bayesian learning \cite{LP, EP}, we probe into some properties from a probabilistic perspective. In this way, the entries of closed-form solution $\bm{\varOmega}_{1:T}$ mean finding their most probable values given some data $\bm{D}=\{\bm{D}_1,\bm{D}_2,\dots,\bm{D}_T\}$. Hence, we can formulate the conditional probability $p(\bm{\varOmega}_{1:T}|\bm{D})$ as

\begin{equation}\label{online_Bayes}
    \begin{split}
        &p(\bm{\varOmega}_{1:T}|\bm{D})\\
        =&\frac{p(\bm{\varOmega}_{1:T},\bm{D}_T|\bm{D}_1,\dots,\bm{D}_{T-1})p(\bm{D}_1,\dots,\bm{D}_{T-1})}{p(\bm{D}_T|\bm{D}_1,\dots,\bm{D}_{T-1})p(\bm{D}_1,\dots,\bm{D}_{T-1})}\\
        =&\frac{p(\bm{D}_T|\bm{\varOmega}_{1:T},\bm{D}_1,\dots,\bm{D}_{T-1})p(\bm{\varOmega}_{1:T}|\bm{D}_1,\dots,\bm{D}_{T-1})}{p(\bm{D}_T|\bm{D}_1,\dots,\bm{D}_{T-1}))}\\
        =&\frac{p(\bm{D}_T|\bm{\varOmega}_{1:T})p(\bm{\varOmega}_{1:T}|\bm{D}_1,\dots,\bm{D}_{T-1})}{p(\bm{D}_T)}
    \end{split}
\end{equation}

\noindent Then, Eq. (\ref{online_Bayes}) is further processed with logarithm

\begin{equation}\label{log_p}
    \begin{split}
        &\log p(\bm{\varOmega}_{1:T}|\bm{D})\\
        = &\log p(\bm{D}_T|\bm{\varOmega}_{1:T})+\log p(\bm{\varOmega}_{1:T}|\bm{D}_1,\dots,\bm{D}_{T-1})\\
        & -\log p(\bm{D}_T)
    \end{split}
\end{equation}

\noindent where the log-likelihood probability $\log p(\bm{D}_T|\bm{\varOmega}_{1:T})$ is the task-specific objective function for task $T$, $\log p(\bm{D}_T)$ is a parameter-free constant, and the posterior distribution $\log p(\bm{\varOmega}_{1:T}|\bm{D}_1,\dots,\bm{D}_{T-1})$ containing information of previous tasks is the core to knowledge retention. Hence,

\begin{equation}\label{argmaxmin}
    \begin{split}
        &\bm{\varOmega}_{1:T}\\
        =&\arg \max_{\bm{\varOmega}_{1:T}}\log p(\bm{\varOmega}_{1:T}|\bm{D})\\
        =&\arg \max_{\bm{\varOmega}_{1:T}}\log p(\bm{D}_T|\bm{\varOmega}_{1:T})
        +\log p(\bm{\varOmega}_{1:T}|\bm{D}_1,\dots,\bm{D}_{T-1})\\
        \triangleq&\arg \min_{\bm{\varOmega}_{1:T}}\frac{1}{2N_T}\Vert\bm{E}_T\Vert_2^2-\log p(\bm{\varOmega}_{1:T}|\bm{D}_1,\dots,\bm{D}_{T-1})
    \end{split}
\end{equation}

\noindent Note that the above posterior distribution in Eq. (\ref{argmaxmin}) cannot be calculated directly but can be tackled by the Laplace approximation~\cite{LP, mackay1992practical}. On this basis, we would decompose the distribution into multiple terms, one for each task, i.e., $\log p(\bm{\varOmega}_{1:T}|\bm{D}_1,\dots,\bm{D}_{T-1})\approx \sum_{t=1}^{T-1}\log p(\bm{\varOmega}_{1:T}|\bm{D}_t)$ where each $p(\bm{\varOmega}_{1:T}|\bm{D}_t)$ can be sequentially approximated as a Gaussian distribution with its mean given by the $\bm{\varOmega}_{t}$ and variance given by $-\bm{H}_t^{-1}$. The Hessian matrix $\bm{H}_t=\nabla^2_{\bm{\varOmega}_{1:T}}h(\bm{\varOmega}_{1:T})$ corresponds to $\bm{\varOmega}_{1:T} = \bm{\varOmega}_t$ in the Taylor series. 



However, the computation of $\bm{H}_t$ may be complicated and time-consuming. The properties of Fisher information matrix \cite{FIM1, FIM2} could be an alternative. Specifically, the Fisher information matrix $\bm{F}$ of conditional probability $p_{\bm{\pi}}(\bm{z})$ with respect to a vector $\bm{\pi}$ is defined below

\begin{equation}\label{FIM}
    \bm{F}=\mathbb{E}_{\bm{z}}\bigg[\nabla_{\bm{\pi}}\log p_{\bm{\pi}}(\bm{z})\nabla_{\bm{\pi}}\log p_{\bm{\pi}}(\bm{z})^\mathrm{T}\bigg]
\end{equation}

\noindent and the Hessian matrix $\bm{H}$ of $\log p_{\bm{\pi}}(\bm{z})$ is given by

\begin{equation}\label{Hessian}
    \bm{H}=\nabla_{\bm{\pi}}^2\log p_{\bm{\pi}}(\bm{z})
\end{equation}

\begin{lemma}\cite{FIM1}\label{H&F}
    The negative expected of the Hessian matrix $\bm{H}$ of log-likelihood is equal to the Fisher information matrix $\bm{F}$.
\end{lemma}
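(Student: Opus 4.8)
The plan is to prove the classical identity relating the Fisher information matrix to the negative expected Hessian of the log-likelihood, specializing the statement of Lemma~\ref{H&F} to the notation of Eq.~(\ref{FIM}) and Eq.~(\ref{Hessian}). The key structural fact I would exploit is that a probability density integrates to one, so differentiating $\int p_{\bm{\pi}}(\bm{z})\,d\bm{z}=1$ under the integral sign kills certain terms. First I would write the score function $\nabla_{\bm{\pi}}\log p_{\bm{\pi}}(\bm{z}) = \nabla_{\bm{\pi}}p_{\bm{\pi}}(\bm{z})/p_{\bm{\pi}}(\bm{z})$ and record the zero-mean property of the score, $\mathbb{E}_{\bm{z}}[\nabla_{\bm{\pi}}\log p_{\bm{\pi}}(\bm{z})] = \int \nabla_{\bm{\pi}}p_{\bm{\pi}}(\bm{z})\,d\bm{z} = \nabla_{\bm{\pi}}\!\int p_{\bm{\pi}}(\bm{z})\,d\bm{z} = \nabla_{\bm{\pi}}\mathbf{1} = \bm{0}$.

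Next I would differentiate the identity $\int \nabla_{\bm{\pi}}\log p_{\bm{\pi}}(\bm{z})\, p_{\bm{\pi}}(\bm{z})\,d\bm{z} = \bm{0}$ once more with respect to $\bm{\pi}$, again passing the derivative through the integral. By the product rule this yields $\int \big[\nabla_{\bm{\pi}}^2\log p_{\bm{\pi}}(\bm{z})\big]p_{\bm{\pi}}(\bm{z})\,d\bm{z} + \int \nabla_{\bm{\pi}}\log p_{\bm{\pi}}(\bm{z})\,\nabla_{\bm{\pi}}p_{\bm{\pi}}(\bm{z})^{\mathrm{T}}\,d\bm{z} = \bm{0}$. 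Rewriting $\nabla_{\bm{\pi}}p_{\bm{\pi}}(\bm{z})^{\mathrm{T}} = p_{\bm{\pi}}(\bm{z})\,\nabla_{\bm{\pi}}\log p_{\bm{\pi}}(\bm{z})^{\mathrm{T}}$ in the second integral, the left-hand side becomes $\mathbb{E}_{\bm{z}}\big[\nabla_{\bm{\pi}}^2\log p_{\bm{\pi}}(\bm{z})\big] + \mathbb{E}_{\bm{z}}\big[\nabla_{\bm{\pi}}\log p_{\bm{\pi}}(\bm{z})\,\nabla_{\bm{\pi}}\log p_{\bm{\pi}}(\bm{z})^{\mathrm{T}}\big]$. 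Comparing with Eq.~(\ref{FIM}) and Eq.~(\ref{Hessian}), the second expectation is exactly $\bm{F}$ and the first is $\mathbb{E}_{\bm{z}}[\bm{H}]$, so $\mathbb{E}_{\bm{z}}[\bm{H}] + \bm{F} = \bm{0}$, i.e., $\bm{F} = -\mathbb{E}_{\bm{z}}[\bm{H}]$, which is the claim.

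The main obstacle is not algebraic but analytic: justifying the interchange of differentiation and integration at both steps. This requires the usual regularity assumptions — the support of $p_{\bm{\pi}}$ does not depend on $\bm{\pi}$, $p_{\bm{\pi}}(\bm{z})$ is twice continuously differentiable in $\bm{\pi}$, and the relevant derivatives are dominated by integrable functions uniformly in a neighborhood of $\bm{\pi}$ — so that the dominated convergence theorem (or Leibniz's rule) applies. In the present context $p_{\bm{\pi}}$ is the likelihood induced by a smooth neural-network model with a softmax/Gaussian output, for which these conditions hold, so I would state them as standing assumptions and cite the standard reference~\cite{FIM1} rather than re-derive the measure-theoretic details. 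A secondary point worth a sentence is that the identity holds pointwise in $\bm{\pi}$; when applied in Eq.~(\ref{argmaxmin})--(\ref{Hessian}) it is evaluated at $\bm{\pi} = \bm{\varOmega}_t$, the mode of each per-task Gaussian factor, which is precisely where the Laplace approximation needs the curvature.
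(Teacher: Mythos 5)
Your proposal is correct and is essentially the same argument as the paper's: both rest on differentiating $\int p_{\bm{\pi}}(\bm{z})\,d\bm{z}=1$ under the integral sign so that the term involving $\nabla_{\bm{\pi}}^2 p_{\bm{\pi}}(\bm{z})$ vanishes in expectation, leaving $\mathbb{E}_{\bm{z}}[\bm{H}]=-\bm{F}$. The only cosmetic difference is that you reach this by differentiating the zero-mean score identity once more (product rule), whereas the paper expands $\nabla_{\bm{\pi}}^2\log p_{\bm{\pi}}(\bm{z})$ via the quotient rule and then takes expectations; your explicit mention of the regularity conditions needed to interchange differentiation and integration is a welcome addition the paper leaves implicit.
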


\begin{proof}\label{pfoof}
    Taking expectation with respect to Eq. (\ref{Hessian}), we have
    \begin{equation}
        \begin{split}
            &\mathbb{E}_{\bm{z}}\bigg[\nabla_{\bm{\pi}}^2\log p_{\bm{\pi}}(\bm{z})\bigg]\\
            =&\mathbb{E}_{\bm{z}}\bigg[\frac{\nabla_{\bm{\pi}}^2p_{\bm{\pi}}(\bm{z})}{p_{\bm{\pi}}(\bm{z})}-\bigg(\frac{\nabla_{\bm{\pi}}p_{\bm{\pi}}(\bm{z})}{p_{\bm{\pi}}(\bm{z})}\bigg)\bigg(\frac{\nabla_{\bm{\pi}}p_{\bm{\pi}}(\bm{z})}{p_{\bm{\pi}}(\bm{z})}\bigg)^\mathrm{T}\bigg]\\
            =&\mathbb{E}_{\bm{z}}\bigg[\frac{\nabla_{\bm{\pi}}^2p_{\bm{\pi}}(\bm{z})}{p_{\bm{\pi}}(\bm{z})}\bigg]-\mathbb{E}_{\bm{z}}\bigg[\bigg(\frac{\nabla_{\bm{\pi}}p_{\bm{\pi}}(\bm{z})}{p_{\bm{\pi}}(\bm{z})}\bigg)\bigg(\frac{\nabla_{\bm{\pi}}p_{\bm{\pi}}(\bm{z})}{p_{\bm{\pi}}(\bm{z})}\bigg)^\mathrm{T}\bigg]\\
            =&\int\frac{\nabla_{\bm{\pi}}^2p_{\bm{\pi}}(\bm{z})}{p_{\bm{\pi}}(\bm{z})}p_{\bm{\pi}}(\bm{z})d\bm{z}-\mathbb{E}_{\bm{z}}\bigg[\nabla_{\bm{\pi}}\log p_{\bm{\pi}}(\bm{z})\nabla_{\bm{\pi}}\log p_{\bm{\pi}}(\bm{z})^\mathrm{T}\bigg]\\
            =&\nabla_{\bm{\pi}}^2\int p_{\bm{\pi}}(\bm{z})d\bm{z}-\bm{F}\\
            =&-\bm{F}
        \end{split}
    \end{equation}
\end{proof}

\noindent This completes the proof of \textit{Lemma 1}. $\hfill\square$

Hence, we have $\bm{F}=-\mathbb{E}_{\bm{z}}\big[{\bm{H}}\big]$, which is a surrogate to the Hessian matrix. Besides, it can further be computed from the squares of first-order gradient alone in the empirical form \cite{FIM1, FIM2}. That is, $\bm{\mathcal{F}}_t=-\mathbb{E}_{\bm{D}_t}\big[{\bm{H}_t}\big]$ when given the training data $\bm{D}_t=\{(\bm{X}_t,\bm{Y}_t)\}$, in which $\bm{X}_t=\{\bm{x}_1,\dots,\bm{x}_p,\dots,\bm{x}_{N_t}\}$ and $\bm{Y}_t=\{\bm{y}_1,\dots,\bm{y}_p,\dots,\bm{y}_{N_t}\}$. Therefore, the declarative parameter plasticity can be formulated by

\begin{equation}\label{myEFIM}
    \bm{\mathcal{F}}_t\!=\!\frac{1}{N_t}\sum_{p=1}^{N_t}\nabla_{\bm{\varOmega}_{1:T}}\log p(\bm{\varOmega}_{1:T}|\bm{x}_p)\nabla_{\bm{\varOmega}_{1:T}}\log p(\bm{\varOmega}_{1:T}|\bm{x}_p)^\mathrm{T}
\end{equation}

\noindent where $\bm{\mathcal{F}}_t=\bm{P}_t\odot\bm{P}_t$ encourages the new task to update the closed-form solution according to how plastic are these declarative parameters of prior tasks. We summarize the training procedure of the proposed CLSNet in \textbf{Algorithm \ref{alg_1}}.

\begin{algorithm}[tbp]
    \label{alg_1}
    \caption{Training procedure of CLSNet for CIL}
    \LinesNumbered
    \KwIn{Tasks $\bm{D}_1,\bm{D}_2,\dots,\bm{D}_T$ presented sequentially}
    \KwOut{Cloased-form solution $\bm{\varOmega}_{1:T}$ and declarative parameter statistic $\bm{\varOmega}$}  
    \For{$t=1, 2,\dots,T$}{
        \textbf{// Subnetwork I: Pre-Trained Feature Extractor}\\
        Initialize a pre-trained CNN model\;
        Obtain optimal representations $\bm{G}^{*}$ by Eqs. (\ref{Gi}-\ref{Theta})\;
        Set the output of Subnetwork I: $\bm{A}_t\leftarrow[\bm{Z}_t,\bm{G}^{*}]$\;
        \textbf{// Subnetwork II: Analytical SLFN Classifier}\\
        Compute declarative parameters $\bm{\varOmega}_t$ by Eq. (\ref{varOmega})\;
        \If{$t<T$}{
            Calculate the plasticity matrix $\bm{\mathcal{F}}_t$ by Eq. (\ref{myEFIM})\;
        }
        \For{$c=1, 2,\dots,C_t$}{
            Recursively consolidate $\bm{\varOmega}_{1:t,c}$ by Eq. (\ref{varOmega_cc})\;		
        }
        Obtain closed-form solution $\bm{\varOmega}_{1:t}=\{\bm{\varOmega}_{1:t,c}\}_{c=1}^{C_t}$\;
    } 
    \KwResult{CLSNet output $\bm{\widehat{Y}}_t=\bm{A}_t\bm{\varOmega}_{1:T}$ over tasks seen}
\end{algorithm}

\subsection{Managing the Trade-Off for Graceful Forgetting}
Throughout this paper, we use a non-growing architecture for CIL. It is natural to think of whether it has the opportunity to train on all incoming tasks given the limited network capacity. We argue that graceful forgetting is also a requisite against catastrophic forgetting, e.g., selectively removing inessential information is crucial to spare space for better learning the future ones. However, forgetting is subconscious and differs across people. Empirically, we simplify and investigate this issue by borrowing the concept of first-in-first-out (FIFO) \cite{barreiros2015queuing} in computing and systems theory. Concretely, we implement memory fading with a potential priority queue where the most previously learned tasks are the first to be removed, which is in line with the fact that the earlier experiences are easier to forget. CLSNet can innately manage the long/short-term declarative parameter of each seen task. This can be achieved by a minimal value in the trade-off $\lambda_t$ ($\gamma_t=\lambda_t N_{t+1}$). In this way, graceful forgetting of previously learned trivial tasks makes our method suffice to train on incoming tasks better.

\subsection{\textcolor{Blue}{Discussion}} \label{Discussion}
\textcolor{Blue}{
In this section, we discuss the relationship between the proposed CLSNet and the representative regularization-based method, EWC \cite{PNAS2017EWC}. Both serve the same purpose of mitigating catastrophic forgetting by accumulating quadratic terms to penalize changes in network parameters deemed important for old tasks. However, there are two intrinsic differences between CLSNet and EWC. (i) The original EWC forces a model to remember older tasks more vividly, which involves double counting the data and accumulating Fisher regularization at each layer~\cite{huszar2017quadratic}. This not only over-constrains the network parameters for learning new tasks but also leads to systematic bias favoring earlier tasks. Meanwhile, the entire EWC network experiences a linear growth in network parameters as the number of tasks increases. By contrast, our CLSNet only leverages the regularization in the single-hidden layer feed-forward network classifier, which builds on top of a plastic CNN feature extractor. This means that CLSNet only needs to maintain the Fisher information matrix in the decision layer, rather than the layer-wise implementation in EWC. Meanwhile, the systematic bias towards previously learned tasks can be effectively counterbalanced by incorporating the third term in Eq. (\ref{Obj_fun}), as demonstrated in the experiments (see Sec.~\ref{Ablation_5}). (ii) Rather than the commonly-used softmax layer, we derive a closed-form solution to prevent the decision boundary from being distorted, which is easy-implemented, parameter efficient, and can converge more quickly than a standard BP algorithm. Additionally, we find that some regularization-based methods~\cite{ICML2017SI, ECCV2018MAS}, including EWC, are highly susceptible to the trade-off coefficients for different settings (learning rate, batch size, etc.) and benchmarks (number of tasks and task orderings), which has been well tackled in our method. In particular, the closed-form solution has strong task-order robustness, with similar accuracies regardless of random task orderings for multiple runs (see Sec.~\ref{Ablation_4}).}

\section{Experiments} \label{Sec_4}
\subsection{Experiment Setup} \label{Experiment_setup}
\textbf{Datasets.}
We follow a popular data split in the CIL scenario to simulate emerging new classes by respectively disjointing three benchmark datasets, including FashionMNIST, CIFAR-100, and ImageNet-Subset \cite{tang2022learning, van2020replay, CVPR2021EFT, AAAI2021PCL}. For convenience, we use the nomenclature [DATASET]-$C/T$ to denote a task sequence with $C$ classes evenly divided into $T$ tasks, where the suffix indicates that a model needs to recognize $C/T$ new classes in each task (session).

\textbf{Evaluation metrics.}
We first adopt three main metrics to evaluate the CIL performance of different models in depth (all higher is better). \textbf{Avarage accuracy (Avg Acc)} measures the average test classification accuracy of a model on all tasks seen so far: $\text{Avg Acc}=\frac{1}{T}\sum_{t=1}^T R_{T,t}$, where $R_{T,t}$ is the test accuracy for task $t$ after training on task $T$; \textbf{Backward Transfer (BWT)} \cite{NIPS2017GEM} indicates a model's ability in knowledge retention, averaged over all tasks: $\text{BWT}=\frac{1}{T-1}\sum_{t=1}^{T-1} R_{T,t}-R_{t,t}$. Negative BWT means that learning new tasks causes forgetting past tasks; \textbf{Forward Transfer (FWT)} \cite{NIPS2020FROMP} measures how well a model uses previously learned knowledge to improve performance on the recently seen tasks: $\text{FWT}=\frac{1}{T-1}\sum_{t=2}^{T} R_{t,t}-R_t^{ind}$, where $R_t^{ind}$ is the test classification accuracy of an independent model trained only on task $t$. For similar ACC, the method that has a larger BWT/FWT is better. 

\textcolor{Blue}{
In addition to the above three accuracy-related metrics, we also report the indicators of training/computational efficiency. That is, the \textbf{Running Time (s)} and the \textbf{Memory Budget (MB)}. For the former, we measure the running time per epoch since different methods have very different requirements in computation; For the latter, we align the memory cost of both network parameters and old samples for fair comparisons~\cite{zhou2022model}, i.e., switching them to a 32-bit floating number. In this way, both the final model size (\#model, MB) and exemplar buffers (\#exemplar, MB) are counted into the memory budget (MB), calculated with an approximate summation of them.}


\textbf{Compared methods.}
We compare CLSNet with both classic and the latest baselines, which covers \textit{rehearsal-based approaches}: \textbf{FS-DGPM} \cite{deng2021flattening} \textbf{ARI} \cite{wang2022anti}, \textbf{GEM} \cite{NIPS2017GEM}, \textbf{RM} \cite{bang2021rainbow}, \textbf{RtF} \cite{van2018generative}, \textbf{BiR} \cite{van2020replay}, \textbf{IL2M} \cite{ICCV2019IL2M}, \textbf{LOGD} \cite{CVPR2021LOGD}; \textit{regularization-based approaches}: \textbf{EWC} \cite{PNAS2017EWC}, \textbf{SI} \cite{ICML2017SI}, \textbf{MAS} \cite{ECCV2018MAS}, \textbf{DMC} \cite{WACV2020DMC}, \textbf{OWM} \cite{NMI2019OWM}, \textcolor{Blue}{\textbf{InterContiNet} \cite{wolczyk2022continual}}; \textit{architecture-based approaches}: \textcolor{Blue}{\textbf{DER} \cite{yan2021dynamically}, \textbf{FOSTER}~\cite{wang2022foster}, \textbf{MEMO}~\cite{zhou2022model}, \textbf{DyTox}~\cite{douillard2022dytox}}, \textbf{EFT} \cite{CVPR2021EFT}, \textbf{RPS-Net} \cite{ NIPS2019RPS-Net}, \textbf{PCL} \cite{AAAI2021PCL}, and \textbf{SpaceNet} \cite{sokar2021spacenet}. Due to the peculiarity of ARI, InterContiNet, and DyTox, we do not compare them on the FashionMNIST-10/5. As baselines, we also compare with the naive approach of simply fine-tuning on each new task (\textbf{\textit{None}}; approximate lower bound) and a network that is always trained using the data of tasks seen so far (\textbf{\textit{Joint}}; approximate upper bound).

\textbf{Task protocols.} Only training samples of current task $t$ are available except for rehearsal-based methods, while test samples may come from any of tasks 1 to $t$ at inference time without knowing task identities. Instead of using a fixed task sequence, we run each benchmark five times with randomly shuffled task orders and then report the means and/or standard deviations of these results. Hence, the repeated multiple runs will be entered a sequence of tasks with different orders that are more practical in an open-ended environment. \textcolor{Blue}{We implement our method in PyTorch with NVIDIA RTX 3080-Ti GPUs, and the source code will be released.}

\subsection{Implementation Details} \label{Implementation_details}
\textbf{Network architecture.} In our experiments, all the methods use similar-sized neural network architectures for all the benchmarks, unless otherwise stated. For FashionMNIST-10/5, the architecture is empirically conducted through a simple multi-layer perceptron (MLP) with [784-900-900-10] neurons, among which we divide the penultimate layer into $n=30$ groups with each $k=30$ nodes for our method. No pre-trained CNN feature extractor is used for FashionMNIST-10/5 as a simple model already generates good results. For CIFAR-100, a standard ResNet-18 is employed to provide well-extracted features. We follow the setting in OWM \cite{NMI2019OWM, AAAI2021PCL} where a feature extractor pre-trained was first used to analyze the raw images. Then, the obtained feature vectors were sequentially fed into a simple model, with the similar architecture used in FashionMNIST-10/5, to learn the mappings between combinations of the features and labels of different classes. We apply the pre-trained feature extractor to the proposed CLSNet and all the baselines. \textcolor{Blue}{For the ImageNet-200, it takes the Tiny-ImageNet-200 as an auxiliary dataset to do pre-training with ResNet-50 and then starts CIL with another 200 classes that the pre-trained model is not previously encountered. Specifically, we use a competitive contrastive learning algorithm, SimSiam~\cite{chen2021exploring}, to initialize a pre-trained model. CLSNet first passes input images through the frozen SimSiam to obtain a set of rich representations; it then incrementally learns discriminative information which is unique to novel classes. This avoids data leakage as there are no overlapping classes between the data of pre-training and CIL. Note that initializing a contrastively pre-trained model is available for both our method and all the baselines for a fair comparison.}

\textbf{Hyper-parameter selection.} For all the baselines, we use the open-source code released by their authors. Specifically, we select the SGD optimizer with an initial learning rate of 0.1 for different datasets and the mini-batch size for each task is 100 (FashionMNIST and CIFAR-100) or \{100, 40, 20, 10\} (ImageNet-Subset). For the rehearsal-based methods, we restrict the exemplar memory budget to 2k samples by following the similar setting in RPS-Net \cite{NIPS2019RPS-Net}. By contrast, CLSNet keeps only one matrix ($900\times C$) per task or one vector ($900\times 1$) per class, which is highly efficient compared to storing samples, e.g., 0.03M (declarative parameter) and 5.98MB (rehearsal sample) memory budget for FashionMNIST. For the architecture-based methods, we report the model size after learning all tasks and the regularization branch determines the trade-off from the set \{100, 1000, 10000, 100000\}. Note that the other hyper-parameters are with reference to the original settings by default. For our method, the hyper-parameters used in our experiment are as follows: $\alpha = 0.01$, $\rho=2^{-30}$, and $\gamma_t=10000$ (see Sec.\ref{Ablation_1} in the ablation study for more).

\begin{table*}[htbp]
    \caption{\textcolor{Blue}{Results on FashionMNIST-10/5 measured by five evaluation metrics. The arrow “$\uparrow$” denotes the higher, the better; and the arrow“$\downarrow$” denotes the lower, the better. $^\dag$ refers to the results produced using a ResNet-18 backbone}}
    \label{Table_5tasks}
    \centering
    \begin{tabular}{llcccccc}
        \toprule
        \multirow{2}{*}{Catagory} &\multirow{2}{*}{Method} & \multirow{2}{*}{Avg Acc$\uparrow$} & \multirow{2}{*}{BWT$\uparrow$} & \multirow{2}{*}{FWT$\uparrow$} & \multirow{2}{*}{\textcolor{Blue}{Time (s)$\downarrow$}} & \multicolumn{2}{c}{\textcolor{Blue}{Memory budget}} \\
        & &    &    &    &   &\textcolor{Blue}{Model (MB)$\downarrow$}   & \textcolor{Blue}{Exemplar (MB)$\downarrow$}    \\ \midrule
        \multirow{2}{*}{Non-CL} &\textit{None} (lower bound)   &$\sim$0.1981  &-  &-   &50.45  &5.83  &\ding{55}  \\
        & \textit{Joint}~ (upper bound) &$\sim$0.9661  &-  &-   &59.01  &5.83  &\ding{55}  \\ \midrule
        
        \multirow{5}{*}{Regularization-based} &MAS \cite{ECCV2018MAS} &0.3344 &$-$0.5897 &$-$0.2092 &102.73 &17.47 &\ding{55}\\
        &EWC \cite{PNAS2017EWC} &0.4041 &$-$0.7474 &$-$0.0737 &80.54 &52.43 &\ding{55}\\
        &SI \cite{ICML2017SI}   &0.5567 &$-$0.3748 &$-$0.1325 &66.28 &17.47 &\ding{55}\\
        &DMC \cite{WACV2020DMC} &0.7103 &$-$0.3510 &$-$0.2647 &43.21 &11.65 &\ding{55}\\
        &OWM \cite{NMI2019OWM} &0.8012 &$-$0.1658 &$-$0.0713 &49.23 &11.65 &\ding{55}\\\midrule
        
        \multirow{7}{*}{Rehearsal-based} 
        &BiR \cite{van2020replay} &0.6928 &$-$0.3645 &$-$0.0208 &29.38 &5.83 &5.98\\
        &RtF \cite{van2018generative} &0.7107 &$-$0.2951 &$-$0.0296 &41.23 &5.83  &5.98\\
        &FS-DGPM \cite{deng2021flattening}  &0.8058 &$-$0.1358 &$-$0.0930 &66.10 &5.83 &5.98\\
        &GEM \cite{NIPS2017GEM}  &0.8299 &$-$0.0714 &$-$0.0901 &50.85 &5.83 &5.98\\
        &LOGD \cite{CVPR2021LOGD} &0.8302 &$-$0.1105 &$-$0.0498 &340.18 &5.83 &5.98\\
        &IL2M \cite{ICCV2019IL2M} &0.8703 &$-$0.0683 &$-$0.0201 &53.56 &5.83 &5.98 \\ 
        &RM \cite{bang2021rainbow} &0.8875 &$-$0.1632 &$-$0.0748 &188.09 &5.83 &5.98\\ \midrule
        
        \multirow{7}{*}{Architecture-based} &SpaceNet \cite{sokar2021spacenet} &0.6481 &$-$0.2102 &$-$0.0628 &14.97 &5.83 &\ding{55}\\
        &EFT$^\dag$ \cite{CVPR2021EFT} &0.7953 &$-$0.1528 &$-$0.0652 &152.46 &42.61 &\ding{55} \\
        &RPS-Net \cite{NIPS2019RPS-Net} &0.8061 &$-$0.0297 &$-$0.0305 &59.81 &46.64 &5.98\\
        &PCL \cite{AAAI2021PCL}  &0.8125 &$-$0.1305 &$-$0.0679 &430.25 &5.83 &\ding{55} \\
        &\textcolor{Blue}{DER~$^\dag$ \cite{yan2021dynamically}}  &0.8632 &$-$0.1371 &$-$0.0224 &145.98 &213.60 &5.98\\ 
        &\textcolor{Blue}{MEMO$^\dag$~\cite{zhou2022model}}  &0.8758 &$-$0.1386 &$-$0.0259 &30.48 &138.76 &5.98\\
        &\textcolor{Blue}{FOSTER$^\dag$~\cite{wang2022foster}}  &0.8831 &$-$0.1640 &$-$0.0985 &41.42 &85.22 &5.98\\ \midrule
        
        \multirow{1}{*}{Ours} &CLSNet  &0.9185  &$-$0.0644  &$-$0.0194 &15.89  &6.13 &\ding{55}\\ \bottomrule
    \end{tabular}
\end{table*}

\subsection{Results and Discussion} \label{Results_and_discussion}

\subsubsection{Results on FashionMNIST-10/5}
The task sequence is formed by splitting ten objects into five two-class classification tasks, among which each one is sequentially presented. Our method exhibits competitive superiority in the involved five evaluation metrics, as reported in Table \ref{Table_5tasks}. \textit{ First}, we improve Avg Acc upon the second-best method RM by an absolute margin of 3.10\%, concurrently with an acceptable level in the classification error rates compared to the upper bound. We note that rehearsal-based methods obtain better results in general as they can retrieve previously learned knowledge from the exemplar buffers. And among rehearsal-free methods, dynamic networks struggle in the CIL scenario while non-growing networks with regularization suffer from catastrophic forgetting to different degrees. \textcolor{Blue}{\textit{Second}, BWT and FWF of ours come out in front, which is slightly inferior to the highest BWT achieved by RPS-Net but shares the highest FWT. These imply that the proposed method can well transfer knowledge across tasks: (i) the frozen pre-trained feature extractor together with diverse representation augmentation is beneficial to BWT (see the middle panel of Fig. \ref{Fig_CLSNet}), and (ii) the declarative parameter of each old task is well consolidated in the decision layer for FWT (see the right panel of Fig. \ref{Fig_CLSNet}). \textit{Third}, we also show the computational comparison of different methods to indicate the training efficiency. For the running time, our method is remarkably superior to others due to the advantage of closed-form solutions, e.g., faster convergence; The final model size of our method is slightly ($\sim$5\%) larger than the rehearsal-based ones but our method needs no exemplar buffers. These results on FashionMNIST-10/5 demonstrate that the proposed CLSNet is a parameter-efficient CIL method.}

\subsubsection{Results on CIFAR-\{100/5, 100/10\}}
Fig. \ref{CIFAR100_Split5} further breaks down the 100 classes into 10 tasks and compares different methods on the resulting CIFAR-100/10 task sequence. In general, both architecture- and rehearsal-based methods exhibit superior accuracy compared with regularization-based ones. It is observed that the results indicate the competitive performance of our CLSNet in each CIL session, showing consistency with that of FashionMNIST-10/5. Compared with the last accuracy, our method achieves a relative gain of 1.94\% over the second-best method.  We note that our performance is only 8.07\% below the \textit{Joint} approach that is offline trained by using the samples of all classes. By contrast, the \textit{None} approach fails to classify all test images into the classes of the most recently learned task.

\begin{figure}[htbp]
    \centering
    \includegraphics[height=3.2in]{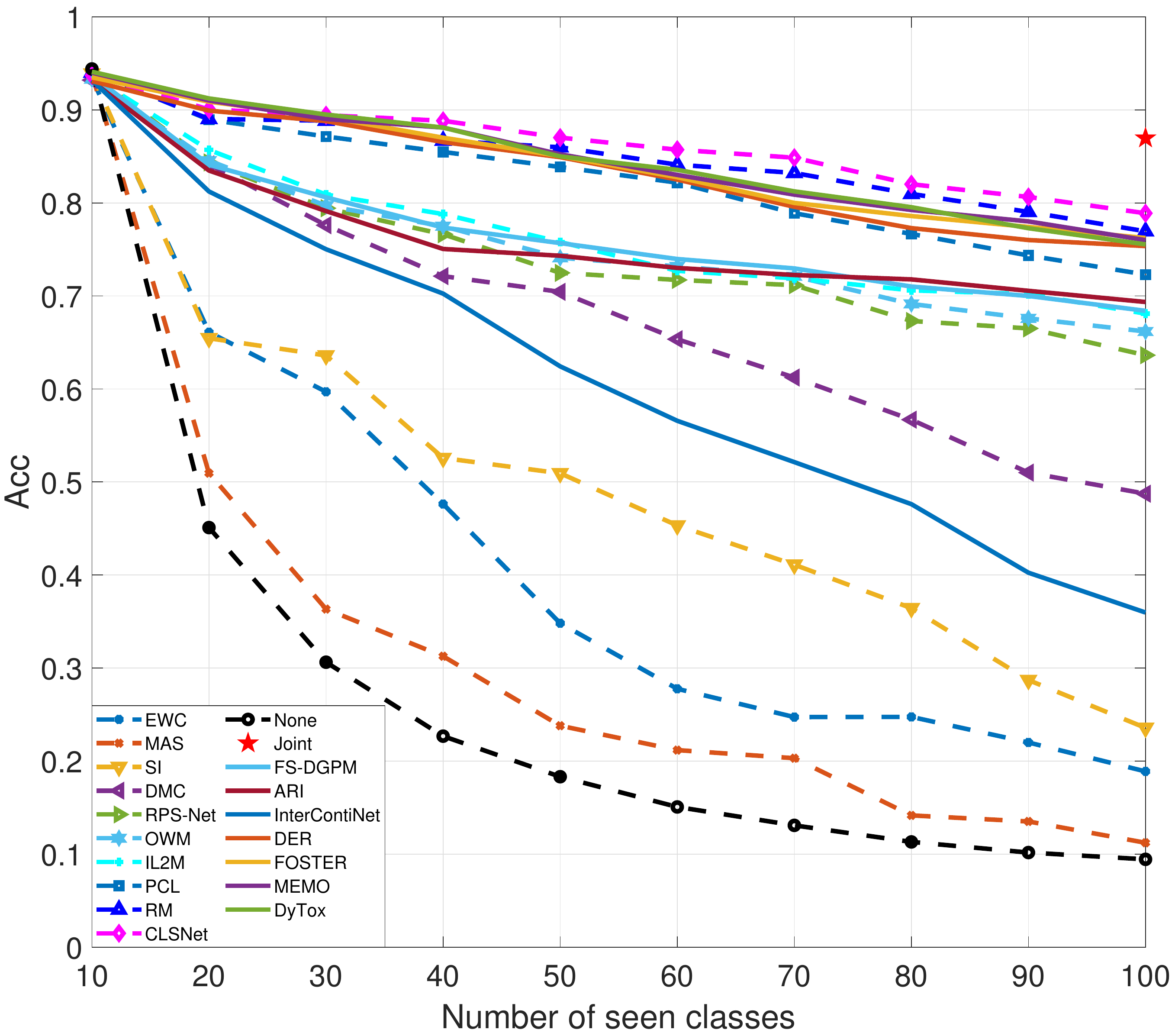}
    \caption{\textcolor{Blue}{Results on CIFAR-100/10, where each method needs to incrementally learn 10 new classes per task.}}
    \label{CIFAR100_Split5}
\end{figure}

\subsubsection{Results on ImageNet-\{200/5, 200/10, 200/20, 200/40\}}
More challenging ImageNet-based task sequences are reported in Table \ref{Table_2}, where we incrementally learn groups of 5, 10, 20, and 50 classes in each training session. Among them, ImageNet-200/5 with more classes per task requires the model to learn a harder problem for each task, while ImageNet-200/40 increases the length of the task sequence, testing a CL method’s knowledge retention. The results indicate superior performance of the proposed method in all cases. \textcolor{Blue}{Specifically, we observe that the proposed method surpasses all the selected SOTA baselines for the ImageNet-200/10 and ImageNet-200/40 task sequences. Compared with the second-best baseline, our method improve it by a margin of 0.73\% and 0.16\% respectively. For the ImageNet-200/5 and ImageNet-200/20 task sequences, our method is slightly inferior to the best method. The comparison with the SOTA methods highlights our incremental learner as a promising tool for mitigating catastrophic forgetting.}

\begin{table}[htbp]
    \caption{\textcolor{Blue}{Comparison of our method with the SOTA baselines on ImageNet-Subset task sequences measured by Avg Acc. We mark the best results in \textbf{Bold} and second-best results in \textbf{\textit{Italic}}}}
    \label{Table_2}
    \centering
    \begin{tabular}{lcccc}
        \toprule
        \multirow{2}{*}{Method} & \multicolumn{4}{c}{ImageNet-} \\ \cmidrule{2-5}
        &200/5 & 200/10 & 200/20 & 200/40 \\ \midrule
        OWM \cite{NMI2019OWM}  &0.5315 &0.5498  &0.5169 & 0.4953  \\
        IL2M \cite{ICCV2019IL2M} &0.5413 &0.5401  &0.5047 & 0.4897  \\
        PCL \cite{AAAI2021PCL}  &0.5342 &\textbf{\textit{0.5642}}  &0.5247 & 0.5217\\
        RM \cite{bang2021rainbow} &0.5398 &0.5569  &0.5279 & 0.5103    \\
        \textcolor{Blue}{FS-DGPM \cite{deng2021flattening}}  &0.5314 &0.5499  &0.5337 & 0.5241    \\
        \textcolor{Blue}{ARI \cite{wang2022anti}} &0.5464 &0.5285  &0.4946 & 0.5017    \\     
        \textcolor{Blue}{FOSTER \cite{wang2022foster}} &0.5461 &0.5601  &0.5327 & \textbf{\textit{0.5258}}   \\ 
        \textcolor{Blue}{DyTox \cite{douillard2022dytox}}  &\textbf{0.5498} &0.5623  &\textbf{0.5412} & 0.5231    \\ 
    
        \midrule
        Ours &\textit{\textbf{0.5468}} &\textbf{0.5715}  &\textit{\textbf{0.5383}} & \textbf{0.5274} \\
        \bottomrule
    \end{tabular}
\end{table}

\subsection{Ablation Studies} \label{Ablation_study}
\subsubsection{Effectiveness of Diverse Representation Augmentation}
\label{Ablation_2}
As described in Sec. \ref{Subnetwork_I}, $g^{\prime}(\bm{X}_t; \bm{\theta}_{g^{\prime}})$ are expanded with $g^{\prime\prime}(\bm{Z}_t; \bm{\theta}^{*}_{g^{\prime\prime}})$ as a whole to augment the transferability on a sequence of tasks (see Fig. \ref{Fig_CLSNet}). Here we use four different types of connections to validate the effectiveness of representation augmentation: (a) $\bm{Z}_t$ means the output of $g^{\prime}(\bm{X}_t; \bm{\theta}_{g^{\prime}})$---drifted representations; (b) $\bm{G}$ means the output of $g^{\prime\prime}(\bm{Z}_t; \bm{\theta}_{g^{\prime\prime}})$---fixed representations; (c) $\bm{G}^{*}$ means the output of $g^{\prime\prime}(\bm{Z}_t; \bm{\theta}^{*}_{g^{\prime\prime}})$---optimal representations; and (d) $\bm{A}_t=[\bm{Z}_t,\bm{G}^{*}]$ means the augmented representations used in this paper. Table \ref{Table_Ablation} reports the Avg Acc (\%) of both task 1 and all tasks, respectively. It can be seen that all of them are effective except for the case of $\bm{Z}_t$. Specifically, the performance of $\bm{A}_t$ on the first task is slightly inferior to $\bm{G}^{*}$. However, it is highly effective in terms of the Avg Acc on all five tasks. This is because the representations obtained by a frozen pre-trained model may not be sufficiently discriminative on a sequence of tasks.

\begin{table}[htbp]
    \caption{Experiments under different connections in the proposed CLSNet reported by the Avg Acc (\%) of task 1 \& all}
    \label{Table_Ablation}
    \centering
    \begin{tabular}{ccc}
        \toprule
        Connection & CIFAR-100/10 & CIFAR-100/20 \\ \midrule
        $\bm{Z}_t$ &89.90 \& 61.25   &76.80 \& 11.92  \\
        $\bm{G}^{~}$ &92.17 \& 63.85  &92.32 \& 57.86  \\
        $\bm{G}^{*}$ &\textbf{92.78} \& 77.23  & \textbf{93.25} \& 73.58 \\
        $\bm{A}_t$ &92.56 \& \textbf{78.43}  &93.17 \& \textbf{74.65} \\
        \bottomrule
    \end{tabular}
\end{table}

\subsubsection{Visualization of the Optimal Representations}
\label{Ablation_3}
To further demonstrate the effectiveness of augmenting transferability, we visualize the optimal representations which will be used for the final classifier. Take FashionMNIST-10/5 as an example, in which 10 classes are evenly split into five 2-class tasks. Fig. \ref{Visualization} shows the t-SNE visualization \cite{van2008visualizing} of the raw sample space $\bm{X}_t$ and the optimal representations $\bm{A}_t$ ($t=1,2,\dots,5)$ of each class. It can be observed that the same classes were well clustered while different classes are clearly separated. Therefore, the optimal representations produced by the plastic pre-trained model could provide sufficiently distinguishable information from a sequence of tasks for the final decision. 

\begin{figure}[tbp]
    \centering
    \subfloat[]{\includegraphics[height=0.85in]{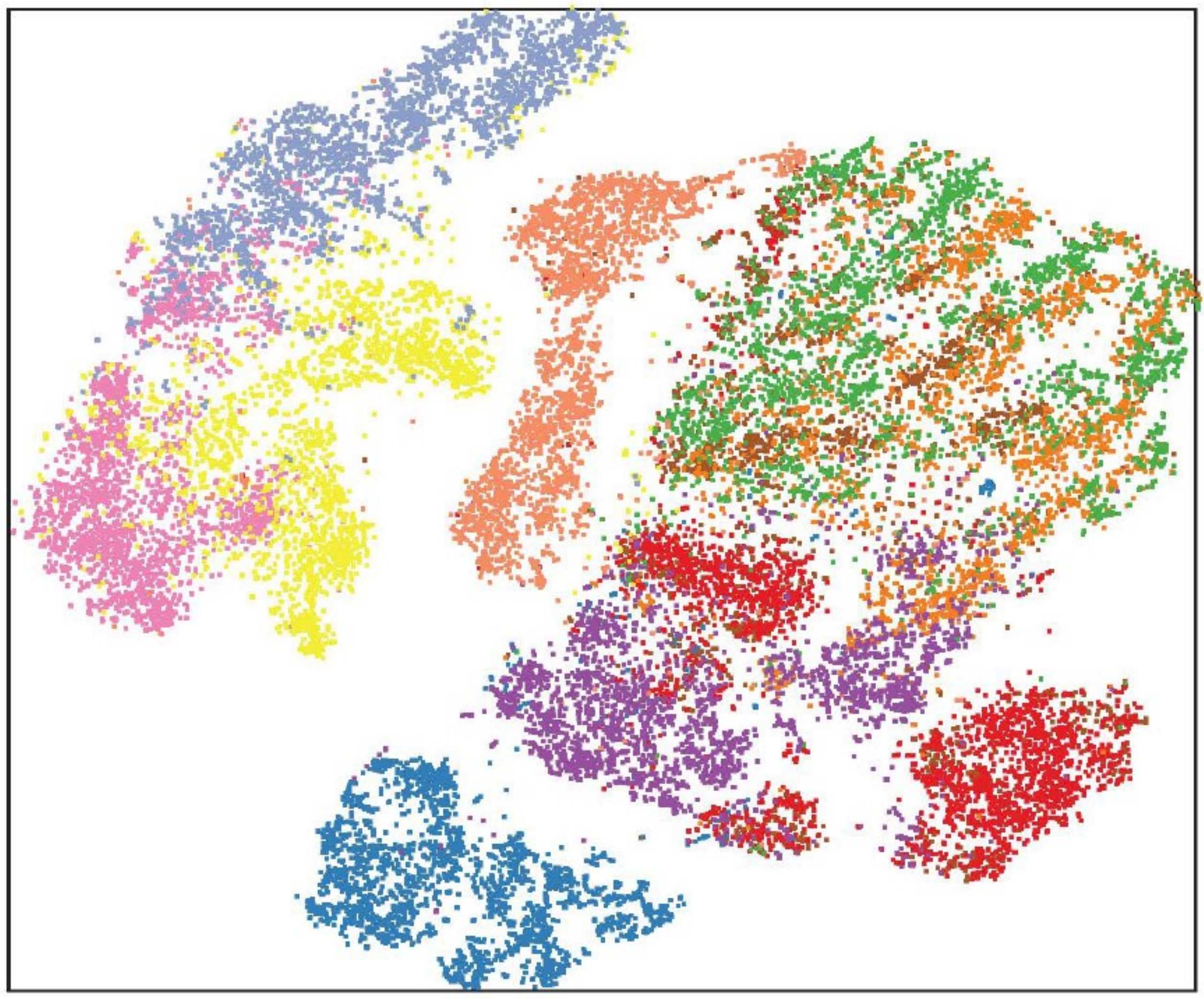}
        \label{Raw_split5}}
    \hfil 
    \subfloat[]{\includegraphics[height=0.85in]{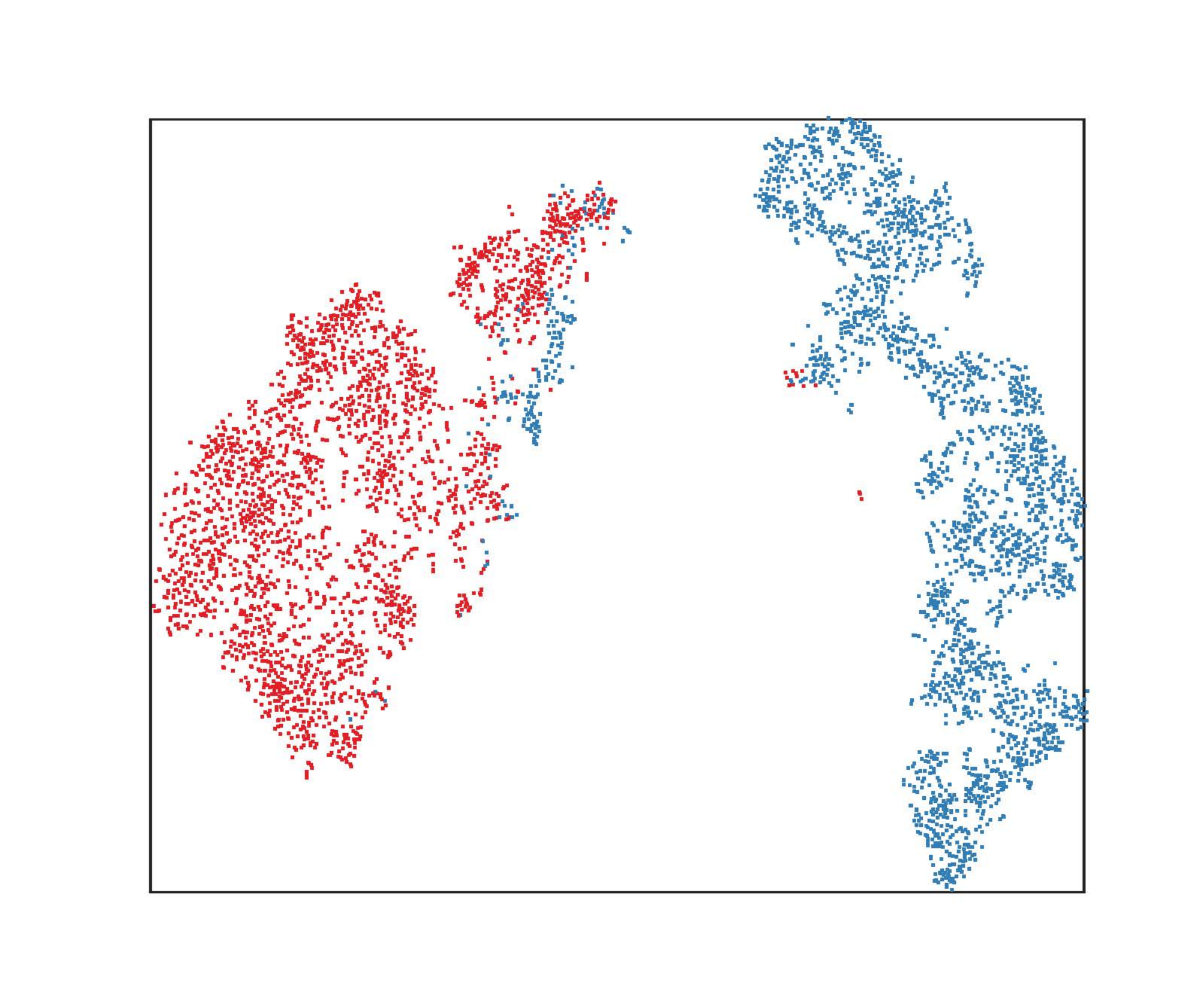}
        \label{Task1}}
    \hfil
    \subfloat[]{\includegraphics[height=0.85in]{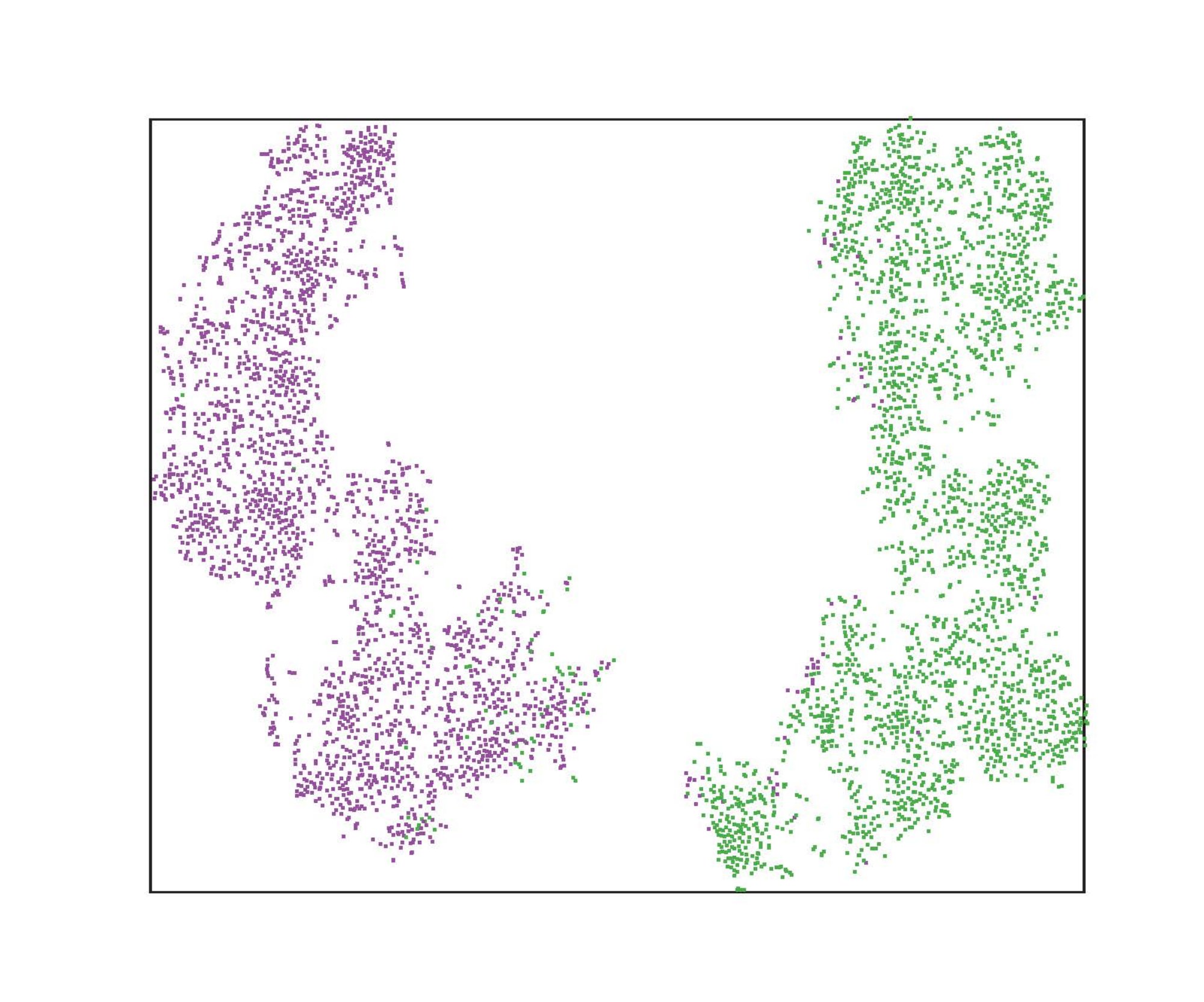}
        \label{Task2}}
    \hfil
    \subfloat[]{\includegraphics[height=0.85in]{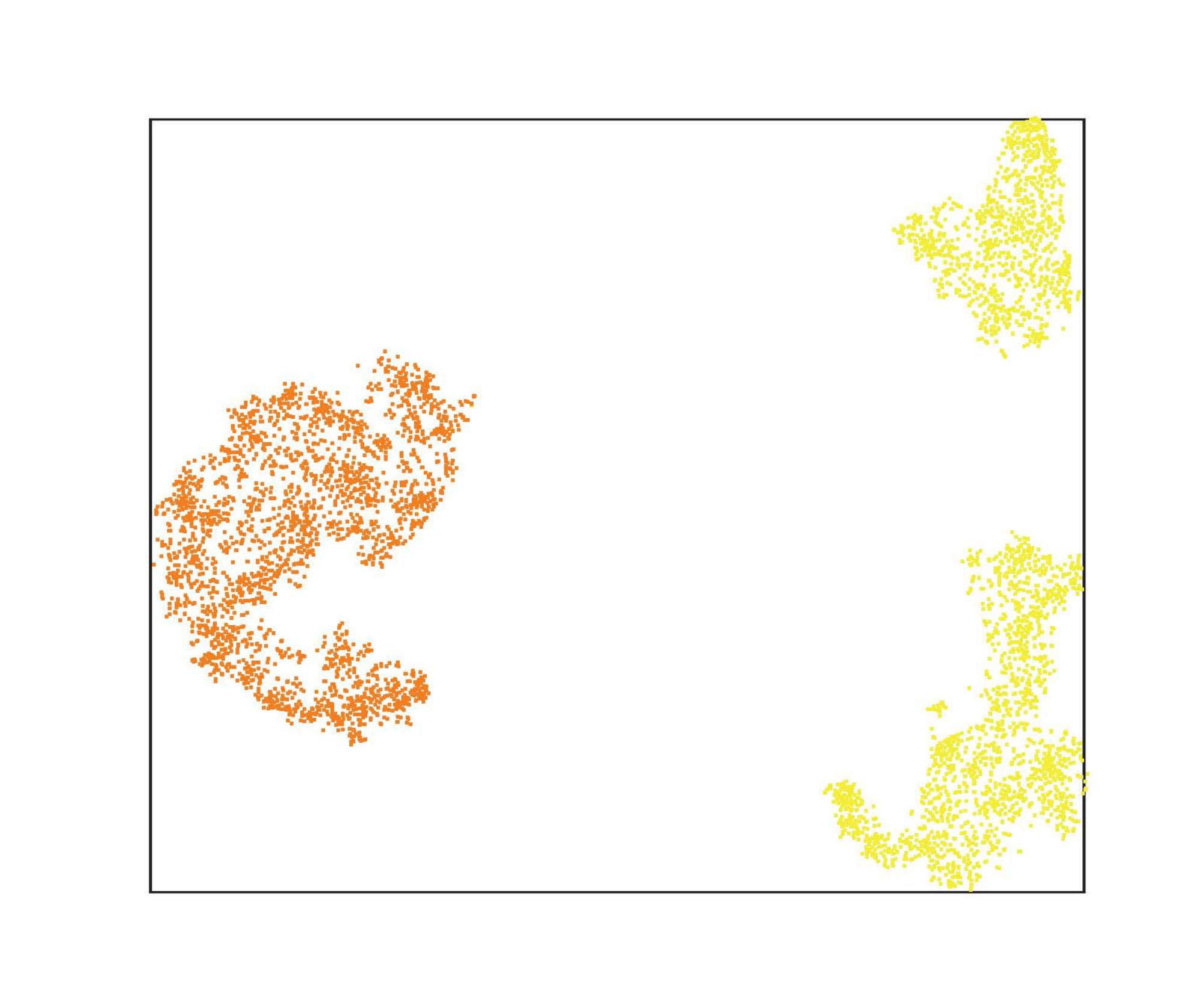}
        \label{Task3}}
    \hfil
    \subfloat[]{\includegraphics[height=0.85in]{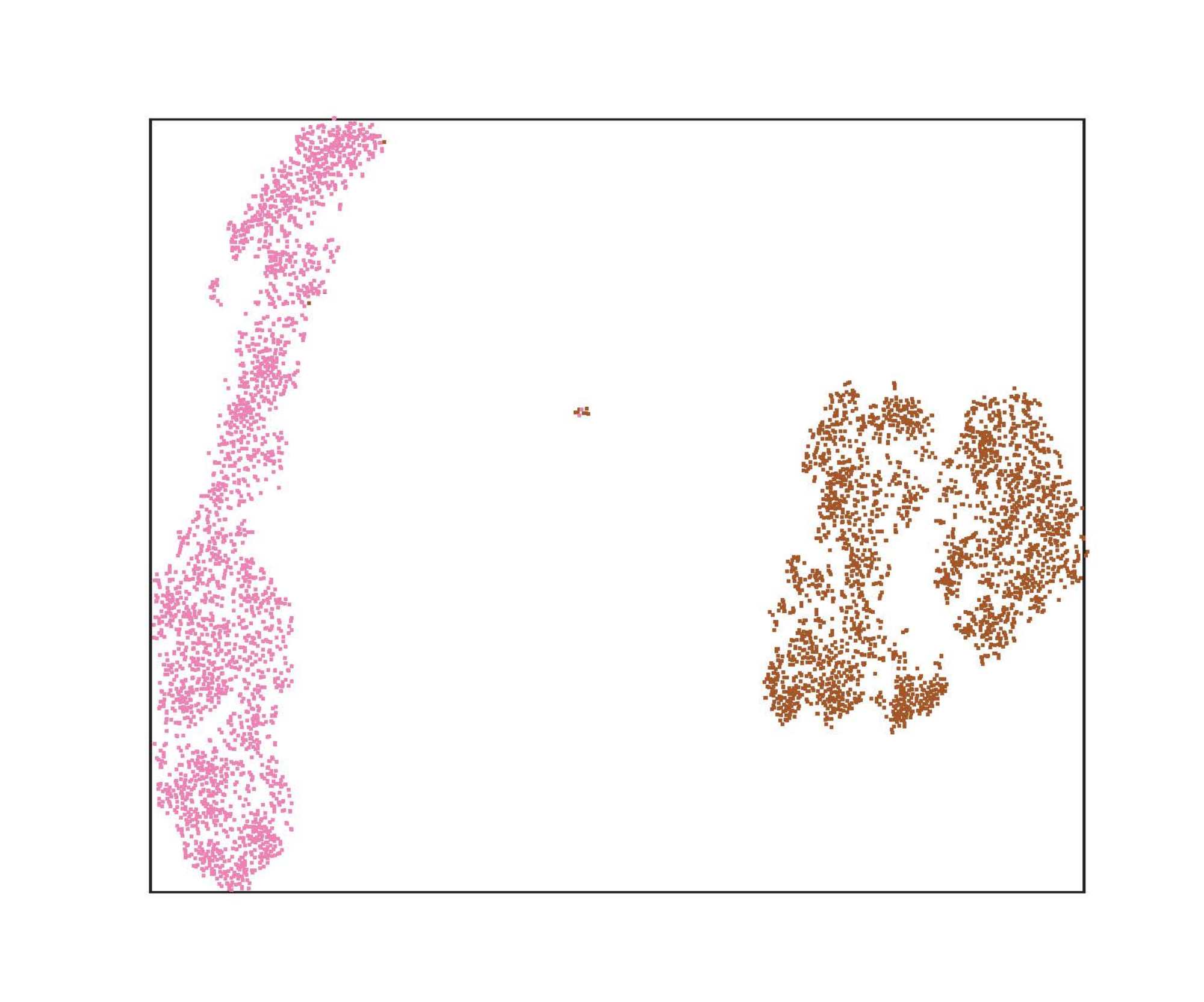}
        \label{Task4}}
    \hfil
    \subfloat[]{\includegraphics[height=0.85in]{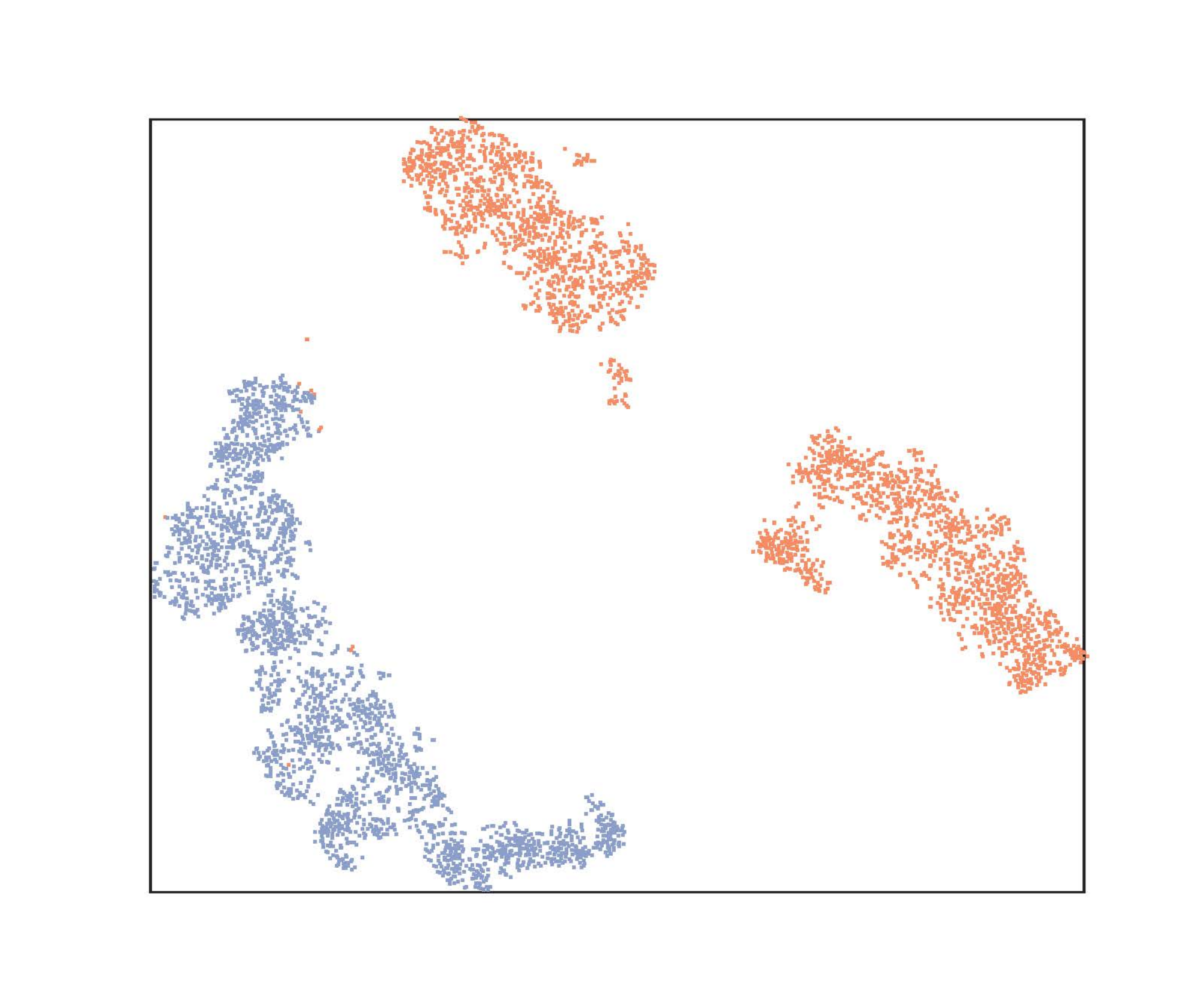}
        \label{Task5}}
    \caption{t-SNE visualization based on FashionMNIST-10/5. Each color represents a class. In this task sequence, training data of all the classes together are first visualized as a reference, followed by the optimal representations after incrementally learning two classes per session. (a) Mixed raw sample space. (b)-(f) Well-clustered representation space.}
    \label{Visualization}
\end{figure}

\subsubsection{\textcolor{Blue}{Effectiveness of Each Component in Objective Function}}
\label{Ablation_5}
\textcolor{Blue}{We then provide an empirical investigation of the effectiveness of each element in Eq. (\ref{Obj_fun}). For simplicity, we denote the corresponding three components as Term 1, Term 2, and Term 3 respectively. Taking FashionMNIST as an example, Table~\ref{Table_Terms} shows the experimental results of our method under different components. (a) In the case of using only Term 1, CLSNet degenerates into a traditional algorithm that is made available for single-task learning. (b) In the case of using Terms 1 and 2, the catastrophic forgetting of old tasks is mitigated as the additional regularization term could constrain the parameters not to deviate too much from those that were optimized. (c) In the case of using Terms 1 and 3, there is a serious performance drop. This is because it fails to globally treat parameters in the decision layer as having a different level of importance. (d) Finally, the last row demonstrates the collaborative performance of all components. Compared with case (b), the marginal boost of case (d) can be closely related to the counterbalanced systematic bias towards old tasks.}

\begin{table}[htbp]
    \caption{\textcolor{Blue}{Experiments under different components in the proposed CLSNet via the FashionMNIST-10/5 task sequence\\ measured by Avg Acc (\%), BWT and FWT}}
    \label{Table_Terms}
    \centering
    \begin{tabular}{cccccc}
        \toprule
        \multicolumn{3}{c}{Component} & \multicolumn{3}{c}{Metric} \\ 
        Term 1   & Term 2   & Term 3   &Avg Acc(\%)   & BWT   & FWT  \\ \midrule
        \ding{51} &\ding{55} &\ding{55}       &19.64 &-0.9903 &0.0004   \\
        \ding{51} &\ding{51} &\ding{55}       &90.43 &-0.0751 &-0.0299   \\
        \ding{51} &\ding{55} &\ding{51}       &25.70  &-0.8954 &-0.0187  \\
        \ding{51} &\ding{51} &\ding{51}       &91.69 &-0.0650 &-0.0203  \\
        \bottomrule
    \end{tabular}
\end{table}

\begin{table}[htbp]
    \caption{Experiments on the task-order robustness reported with the means and standard deviations under multiple random runs}
    \label{Task_order}
    \centering
    \begin{tabular}{lcc}
        \toprule
        Method & FashionMNIST-10/5 & CIFAR-100/5 \\ \midrule
        EWC &42.90$\pm$2.35  &25.14$\pm$3.94   \\
        SI &56.27$\pm$4.02  &35.53$\pm$2.85  \\
        OWM &79.87$\pm$1.25  &66.49$\pm$0.85  \\
        Ours &91.48$\pm$\textbf{0.40}  &76.54$\pm$\textbf{0.73} \\
        \bottomrule
    \end{tabular}
\end{table}

\begin{table*}[tbp]
    \caption{Results on different settings of trade-off via the CIFAR-100/10 task sequence measured by Avg Acc (\%), the final BWT and FWT}
    \label{Trade-off}
    \centering
    \begin{tabular}{lccccccccccrr}
        \toprule
        \multicolumn{1}{c}{\multirow{2}{*}{\begin{tabular}[c]{@{}l@{}}Trade-off \end{tabular}}} & \multicolumn{10}{c}{Avg Acc (\%)} &  &  \\ \cmidrule{2-11}
        & 1 & 2 & 3 & 4 & 5 & 6 & 7 & 8 & 9 & 10 & BWT & FWT \\ \midrule
        1 &93.70 &45.00 &36.37 &28.88 &22.42 &24.98 &19.26 &16.15 &14.23 &16.91 &$-$0.8192 &\textbf{0.0003}  \\
        $10^2$ &93.70 &86.95 &87.87 &86.50 &84.60 &83.12 &81.34 &79.52 &77.38 &74.95 &$-$0.0781 &$-$0.0008 \\
        $10^4$ &93.70 &\textbf{90.05} &\textbf{89.43} &\textbf{88.82} &\textbf{87.02} &\textbf{85.42} &\textbf{84.56} &\textbf{83.61} &\textbf{81.58} &\textbf{78.12}           &$-$0.0117 &$-$0.0375  \\
        $10^6$ &93.70 &84.45 &79.87 &75.95 &74.22 &73.48 &71.21 &70.45 &69.13 &67.58 &\textbf{0.4712} &$-$0.6547   \\ \bottomrule
    \end{tabular}
\end{table*}

\subsubsection{Task-Order Robustness of Analytical Solution}
\label{Ablation_4}
As the task/class orders are unknown in advance for CIL, Table \ref{Task_order} explores the robustness of regularization-based methods under randomly reshuffled task orders. For example, a task sequence with 5 tasks can generate 120 different combinations of orders. Here, we report both the means and standard deviations by running each benchmark five times with randomly shuffled task orders. We observe that the selected baselines are vulnerable to task-order robustness while our method with the analytical solution has the lowest sensitivity, followed by OWM \cite{NMI2019OWM}. Interestingly, OWM is actually an approximation of the analytical solution via the gradient descent method.

\subsubsection{Analysis on the Trade-Off}
\label{Ablation_1}
Table \ref{Trade-off} reports the results of the proposed CLSNet under different trade-off $\gamma_t$ values, which controls how important old tasks are compared to the current task. \textit{First}, our method with $\gamma_t=1$ degenerates into the \textit{None} approach such that the learning is almost completely biased towards the current task and yields positive FWT values. This implies that it would sacrifice previously learned knowledge to improve performance on the newly encountered tasks. \textit{Second}, $\gamma_t=10^2$ indicates that the plasticity on old tasks is very limited and the focus is still on the current task. \textit{Third}, the $\gamma_t=10^4$ setting is used in our method. \textit{Finally}, interference among multiple tasks can cause great damage to the new task when $\gamma_t=10^6$, which obtains a positive BWT value at the cost of current tasks.

\begin{table}[tbp]
    \setlength{\tabcolsep}{1.0mm} 
    \caption{Results on graceful forgetting via the FashionMNIST-10/5 task sequence. We also report the \underline{Avg Acc} (\%) marked by underline on all seen tasks so far}
    \label{Table_disposable}
    \centering
    \begin{tabular}{lcclcccccc}
        \toprule
        \multirow{2}{*}{Metric} & \multicolumn{2}{c}{\multirow{2}{*}{LTM}}&\multirow{2}{*}{} & \multicolumn{6}{c}{STM} \\ \cmidrule(r){5-10}
        & \multicolumn{3}{c}{} & \multicolumn{2}{c}{task 1} & \multicolumn{2}{c}{task 2} & \multicolumn{2}{c}{tasks 1 \& 2} \\ \midrule
        $R_{1,1}$ &96.25 &\underline{96.25}& &96.25 &\underline{96.25} &96.25 &\underline{96.25} & 96.25 &\underline{96.25}\\ \cmidrule(r){1-9}
        $R_{2,1}$ &91.75  &\multicolumn{1}{c}{\multirow{2}{*}{\underline{93.98}}} & &-  &\multicolumn{1}{c}{\multirow{2}{*}{\underline{98.50}}} &91.75  &\multicolumn{1}{c}{\multirow{2}{*}{\underline{93.98}}} &-  &\multicolumn{1}{c}{\multirow{2}{*}{\underline{98.50}}}    \\
        $R_{2,2}$ &96.20  & & &98.50  &  &96.20  & &98.50  &    \\ \cmidrule(r){1-10}
        $R_{3,1}$ &92.20  &\multicolumn{1}{c}{\multirow{3}{*}{\underline{93.18}}}& &-  &\multicolumn{1}{c}{\multirow{3}{*}{\underline{95.34}}}  &91.00  &\multicolumn{1}{c}{\multirow{3}{*}{\underline{95.28}}}  &-  &\multicolumn{1}{c}{\multirow{3}{*}{\underline{96.65}}}    \\
        $R_{3,2}$ &88.65  & & &91.05  &  &-  &  &-  &    \\
        $R_{3,3}$ &98.70  & & &99.60  &  &99.55  &  &96.65  &    \\ \cmidrule(r){1-10}
        $R_{4,1}$ &92.00  &\multicolumn{1}{c}{\multirow{4}{*}{\underline{91.84}}} & &-  &\multicolumn{1}{c}{\multirow{4}{*}{\underline{96.50}}}  &92.35  &\multicolumn{1}{c}{\multirow{4}{*}{\underline{96.25}}}  &-  &\multicolumn{1}{c}{\multirow{4}{*}{\underline{98.80}}}    \\
        $R_{4,2}$ &84.10  & & &93.40  &  &-  &  &-  &    \\
        $R_{4,3}$ &95.45  & & &97.05  &  &97.55  &  &98.15  &    \\
        $R_{4,4}$ &95.80  & & &99.05  &  &98.55  &  &99.45  &    \\ \cmidrule(r){1-10}
        $R_{5,1}$ &98.25  &\multicolumn{1}{c}{\multirow{5}{*}{\underline{91.05}}} & &-  &\multicolumn{1}{c}{\multirow{5}{*}{\underline{94.80}}} &89.95  &\multicolumn{1}{c}{\multirow{5}{*}{\underline{94.78}}}  &-  &\multicolumn{1}{c}{\multirow{5}{*}{\underline{96.65}}}    \\
        $R_{5,2}$ &78.00  & & &90.70  &  &-  &  &-  &    \\
        $R_{5,3}$ &96.30  & & &95.30  &  &95.25  &  &95.10  &    \\
        $R_{5,4}$ &96.35  & & &96.10  &  &97.05  &  &97.05  &    \\
        $R_{5,5}$ &96.00  & & &97.10  &  &96.90  &  &97.80  &   \\ \bottomrule
    \end{tabular}
\end{table}

\begin{table}[tbp]
    \setlength{\tabcolsep}{1.0mm} 
    \caption{Results on graceful forgetting via the CIFAR-100/5 task sequence. We also report the \underline{Avg Acc} (\%) marked by underline on all seen tasks so far}
    \label{Table_disposable2}
    \centering
    \begin{tabular}{lcclcccccc}
        \toprule
        \multirow{2}{*}{Metric} & \multicolumn{2}{c}{\multirow{2}{*}{LTM}}&\multirow{2}{*}{} & \multicolumn{6}{c}{STM} \\ \cmidrule(r){5-10}
        & \multicolumn{3}{c}{} & \multicolumn{2}{c}{task 1} & \multicolumn{2}{c}{task 2} & \multicolumn{2}{c}{tasks 1 \& 2} \\ \midrule
        $R_{1,1}$ &86.25 &\underline{86.25}& &86.25 &\underline{86.25} &86.25 &\underline{86.25} & 86.25 &\underline{86.25}\\ \cmidrule(r){1-9}
        $R_{2,1}$ &82.25  &\multicolumn{1}{c}{\multirow{2}{*}{\underline{81.83}}} & &-  &\multicolumn{1}{c}{\multirow{2}{*}{\underline{82.40}}} &82.25  &\multicolumn{1}{c}{\multirow{2}{*}{\underline{81.83}}} &-  &\multicolumn{1}{c}{\multirow{2}{*}{\underline{82.40}}}    \\
        $R_{2,2}$ &81.40  & & &82.40  &  &81.40  & &82.40  &    \\ \cmidrule(r){1-10}
        $R_{3,1}$ &79.90  &\multicolumn{1}{c}{\multirow{3}{*}{\underline{80.83}}} &&-  &\multicolumn{1}{c}{\multirow{3}{*}{\underline{81.13}}}  &80.90  &\multicolumn{1}{c}{\multirow{3}{*}{\underline{82.75}}}  &-  &\multicolumn{1}{c}{\multirow{3}{*}{\underline{85.00}}}    \\
        $R_{3,2}$ &78.40  & & &78.20  &  &-  &  &-  &    \\
        $R_{3,3}$ &84.20  & & &84.05  &  &84.60  &  &85.00  &    \\ \cmidrule(r){1-10}
        $R_{4,1}$ &79.50  &\multicolumn{1}{c}{\multirow{4}{*}{\underline{80.18}}} & &-  &\multicolumn{1}{c}{\multirow{4}{*}{\underline{80.78}}}  &79.95  &\multicolumn{1}{c}{\multirow{4}{*}{\underline{81.82}}}  &-  &\multicolumn{1}{c}{\multirow{4}{*}{\underline{83.02}}}    \\
        $R_{4,2}$ &76.75  & & &77.10  &  &-  &  &-  &    \\
        $R_{4,3}$ &80.65  & & &81.10  &  &81.25  &  &81.40  &    \\
        $R_{4,4}$ &83.80  & & &84.15  &  &84.25  &  &84.65  &    \\ \cmidrule(r){1-10}
        $R_{5,1}$ &77.35  &\multicolumn{1}{c}{\multirow{5}{*}{\underline{78.94}}}&  &-  &\multicolumn{1}{c}{\multirow{5}{*}{\underline{80.25}}} &77.50  &\multicolumn{1}{c}{\multirow{5}{*}{\underline{79.43}}}  &- &\multicolumn{1}{c}{\multirow{5}{*}{\underline{82.28}}}    \\
        $R_{5,2}$ &76.95  & & &77.35  &  &-  &  &-  &    \\
        $R_{5,3}$ &79.90  & & &80.55  &  &80.90  &  &81.25  &    \\
        $R_{5,4}$ &75.55  & & &77.55  &  &75.15  &  &80.10  &    \\
        $R_{5,5}$ &84.95  & & &85.55  &  &84.25  &  &85.50  &   \\ \bottomrule
    \end{tabular}
\end{table}

\subsection{Investigation on Graceful Forgetting}
Before concluding our work, we investigate the necessity of graceful forgetting given a non-growing backbone. A system with bounded network capacity that retains memories over an entire lifetime will have very little margin for new experiences. That is, it would eventually run out of capacity to learn incoming tasks. Therefore, it is crucial to selectively remove inessential information for better learning in the future ones. Empirically, we implement memory fading with a potential priority queue where the most previously learned tasks are the first to be removed. Instead of treating the seen tasks equally, we perform experiments on the impacts of graceful forgetting by managing the trade-off specific to part tasks.

Table \ref{Table_disposable} records the results of short-term memory (STM) on some early learned task(s) from FashionMNIST-10/5, compared to the original long-term memory (LTM) as a control group. It can be observed that the removal of declarative parameters on task 1, task 2, and tasks 1 \& 2 is beneficial to the remaining ones, somewhat akin to transfer learning. Concretely, we use $R_{T,t}$ to represent the test classification accuracy of a model on task $t$ after training on task $T$. The Acc of $R_{4,1}\!\!\sim\!\!R_{4,4}$ and $R_{5,1}\!\!\sim \!\!R_{5,5}$ are respectively 91.84\% and 91.05\% in the case of LTM. After forgetting task 1, the counterparts of both are improved. Besides, the more tasks a model forgets, the better performance on the rest. This can also be reflected in the experiments on CIFAR-100/5, reported in Table \ref{Table_disposable2}. Therefore, CLSNet with limited network capacity has the opportunity to train on more incoming tasks by erasing task-specific declarative parameters.

\section{Conclusion} \label{Conclusion}
Existing rehearsal-based approaches are mainly driven by an emphasis on rote memorization of old samples, rather than an understanding of how memorized data can influence the ways that connectionist models distinguish and remember previous knowledge. By contrast, this paper proposes a parameter-efficient class-incremental learning method called CLSNet, rethinking CIL from a parameter optimization perspective. Hence, it is simple yet effective to properly optimize the parameters of a model itself compared to buffering and retraining past observations cumulatively. Under this paradigm, a plastic CNN feature extractor and an analytical feed-forward classifier can be jointly optimized. Therefore, the proposed method holistically controls the parameters of a well-trained model such that the decision boundary learned fits new classes without losing its recognition of old classes. Extensive experiments show that the proposed CLSNet outperforms the selected comparison methods on five evaluation metrics and three benchmark datasets, in terms of accuracy gain, memory cost, training efficiency, and task-order robustness. Since there is no uniform standard for graceful forgetting in CIL and the benchmark remains an open question, this work preliminarily investigates its potential effects. Our future work will focus on the implementation of long/short-term memory retention for incrementally learning visual (image and video) tasks.



\ifCLASSOPTIONcaptionsoff
  \newpage
\fi



%

%
%
\bibliographystyle{IEEEtran}
\bibliography{IEEEabrv,mybibfile}

\end{document}